\long\def\ignorethis#1{}
\newcommand{\D}{\mathrm{D}}
\newcommand{\KL}{\mathrm{KL}}
\newcommand{\BE}{\mathbb{E}}
\newcommand{\mc}{\mathcal}
\newcommand{\norm}[1]{\left\lVert#1\right\rVert}
\newtheorem{proposition}{Proposition}[section]
\newtheorem{theorem}{Theorem}[section]
\newtheorem{lemma}[theorem]{Lemma}
\theoremstyle{definition}
\newtheorem{definition}[theorem]{Definition}
\newtheorem{example}{Example}
\newcommand{\para}[1]{\textbf{#1}}
\title{Offline Goal-Conditioned Reinforcement Learning \\ via $f$-Advantage Regression}
\author{%
Yecheng Jason Ma, Jason Yan, Dinesh Jayaraman, Osbert Bastani \\
University of Pennsylvania \\ 
\vspace{0.15cm}
{\tt\small \{jasonyma, jasyan, dineshj, obastani\}@seas.upenn.edu}  \\ 
\textbf{\url{https://jasonma2016.github.io/GoFAR/}}
}
\begin{document}

\maketitle

\begin{abstract}
Offline goal-conditioned reinforcement learning (GCRL) promises general-purpose skill learning in the form of reaching diverse goals from purely offline datasets. We propose $\textbf{Go}$al-conditioned $f$-$\textbf{A}$dvantage $\textbf{R}$egression (GoFAR), a novel regression-based offline GCRL algorithm derived from a state-occupancy matching perspective; the key intuition is that the goal-reaching task can be formulated as a state-occupancy matching problem between a dynamics-abiding imitator agent and an expert agent that directly teleports to the goal. In contrast to prior approaches, GoFAR does not require any hindsight relabeling and enjoys uninterleaved optimization for its value and policy networks. These distinct features confer GoFAR with much better offline performance and stability as well as statistical performance guarantee that is unattainable for prior methods. Furthermore, we demonstrate that GoFAR's training objectives can be re-purposed to learn an agent-independent goal-conditioned planner from purely offline source-domain data, which enables zero-shot transfer to new target domains. Through extensive experiments, we validate GoFAR's effectiveness in various problem settings and tasks, significantly outperforming prior state-of-art. Notably, on a real robotic dexterous manipulation task, while no other method makes meaningful progress, GoFAR acquires complex manipulation behavior that successfully accomplishes diverse goals.

\end{abstract}

\doparttoc %
\faketableofcontents %

\section{Introduction}
Goal-conditioned reinforcement learning~\cite{kaelbling1993learning, schaul2015universal, plappert2018multi} (GCRL) aims to learn a repertoire of skills in the form of reaching distinct goals. \textit{Offline} GCRL~\cite{chebotar2021actionable, yang2022rethinking} is particularly promising because it enables learning general goal-reaching policies from purely offline interaction datasets without any environment interaction~\cite{levine2020offline, lange2012batch}, which can be expensive in the real-world. As offline datasets contain diverse goals and become increasingly prevalent~\cite{dasari2019robonet, kalashnikov2021mt, chebotar2021actionable}, policies learned this way can acquire a large set of useful primitives for downstream tasks~\cite{lynch2020learning}.
A central challenge in GCRL is the sparsity of reward signal~\cite{andrychowicz2017hindsight}; without any additional knowledge about the environment, an agent at a state typically only accrues positive binary reward when the state lies within the goal neighborhood. This sparse reward problem is exacerbated in the offline setting, in which the agent cannot explore the environment to discover more informative states about desired goals. Therefore, designing an effective offline GCRL algorithm is a concrete yet challenging path towards general-purpose and scalable policy learning.

In this paper, we present a novel offline GCRL algorithm, \textbf{Go}al-conditioned $\textbf{f}$-\textbf{A}dvantage \textbf{R}egression (GoFAR), first casting GCRL as a state-occupancy matching~\cite{lee2019efficient, ma2022smodice} problem and then deriving a regression-based policy objective. In particular, GoFAR begins with the following goal-conditioned state-matching objective:
\begin{equation}
    \label{eq:state-matching-objective}
    \min_\pi \D_\KL(d^\pi(s;g) \| p(s;g)) 
\end{equation}
where $d^\pi(s;g)$ is the goal-conditioned state-occupancy distribution of policy $\pi$ and $p(s;g)$ is the distribution of states that satisfy a particular goal $g$. This objective casts goal-conditioned offline RL as an imitation learning problem: a dynamics-abiding agent imitates as well as possible an expert who can teleport to the goal in one step; see Figure~\ref{figure:state-matching-concept-figure}. Posing GCRL this way is mathematically principled, as we show that this objective is equivalent to a probabilistic interpretation of GCRL that additionally encourages maximizing state entropy. More importantly, this objectives enables us to extend a state-of-art offline imitation learning algorithm~\cite{ma2022smodice} to the goal-conditioned setting and admits elegant optimization using purely offline data by considering an $f$-divergence regularized objective and exploiting ideas from convex duality theory~\cite{rockafellar-1970a, boyd2004convex}. In particular, we obtain the dual optimal value function $V^*$ from a single unconstrained optimization problem, using which we construct the optimal advantage function (we refer to this as \textit{f-advantage}) that serves as importance weighting for a regression-based policy training objective; see Figure~\ref{figure:gofar-concept-figure} for a schematic illustration.
\begin{figure*}[t!]
\includegraphics[width=\textwidth]{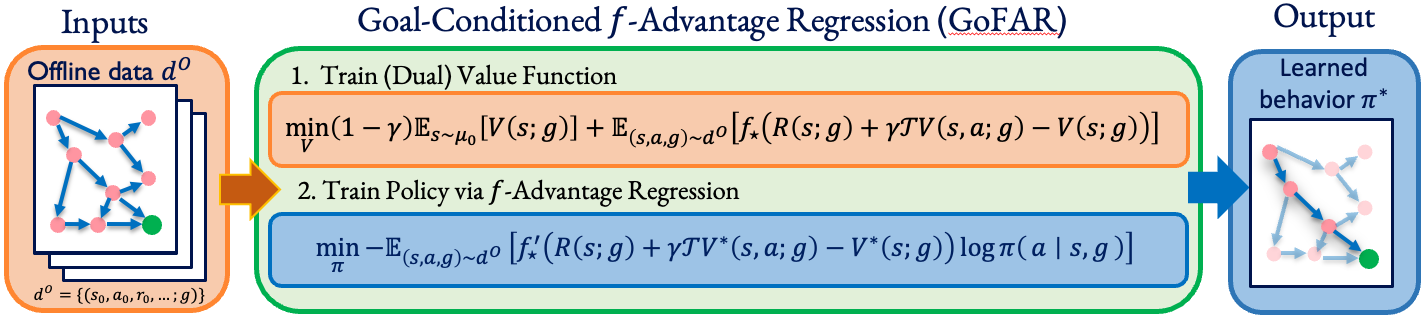}
\vspace{-0.5cm}
\caption{GoFAR schematic illustration.}
\vspace{-1cm}
\label{figure:gofar-concept-figure}
\end{figure*}

\setlength\intextsep{0pt}
\begin{wrapfigure}{r}{0.50\linewidth}
\includegraphics[width=\linewidth]{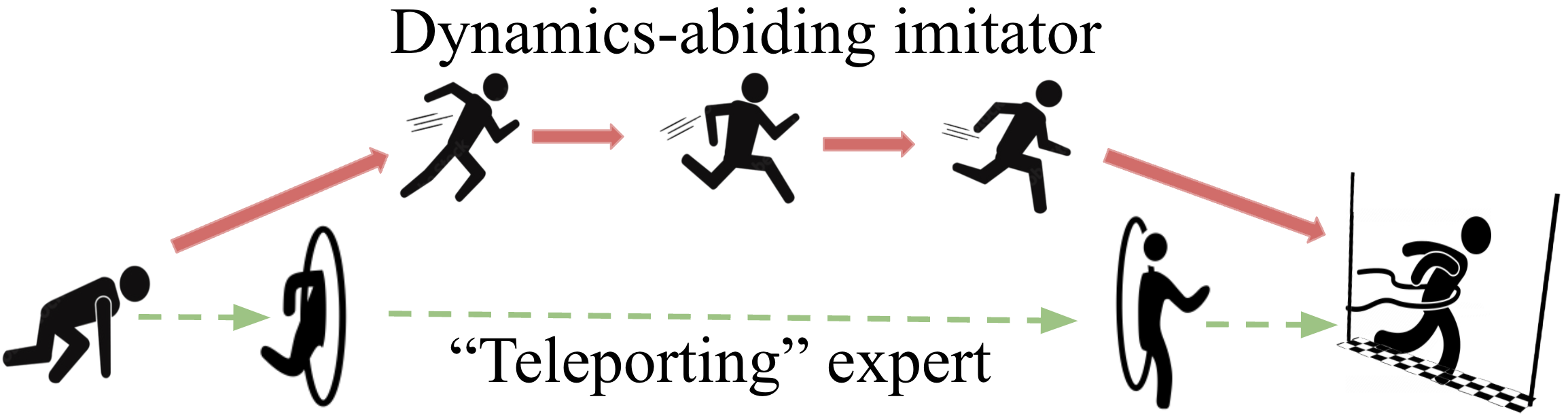}
\vspace{-0.5cm}
\caption{GCRL can be thought of imitating an expert agent that can teleport to goals.}
\label{figure:state-matching-concept-figure}
\end{wrapfigure}

There are several distinct features to our approach. First and foremost is GoFAR’s lack of goal relabeling. Hindsight goal relabeling~\cite{andrychowicz2017hindsight} (known as HER in the literature) relabels trajectory goals to be states that were actually achieved instead of the originally commanded goals. This heuristic is critical for alleviating the sparse-reward problem in prior GCRL methods, but is unnecessary for GoFAR. GoFAR updates its policy as if all data is already coming from the \textit{optimal} goal-conditioned policy and therefore does not need to perform any hindsight goal relabeling. GoFAR's \textit{relabeling-free} training is of significant practical benefits. First, it enables more stable and simpler training by avoiding sensitive hyperparameter tuning associated with HER that cannot be easily performed offline~\cite{zhang2021towards}. Second, hindsight relabeling suffers from hindsight bias in stochastic environments~\cite{schroecker2021universal}, as achieved goals may be accomplished due to noise; this is exacerbated in the offline setting due to inability to collect more data. By bypassing hindsight relabeling, GoFAR promises to be more robust and scalable.

Second, as suggested in Figure~\ref{figure:gofar-concept-figure}, GoFAR's value and policy training steps are completely \textit{uninterleaved}:  we do not need to update the policy until the value function has converged. This confers greater offline training stability~\cite{kumar2019stabilizing, ma2022smodice} compared to prior works, which mostly involve alternating updates to a critic Q-network and policy network. Furthermore, it enables an algorithmic reduction of GoFAR to weighted regression~\cite{cortes2010learning}, which allows us to obtain strong finite-sample statistical guarantee on GoFAR's performance; prior regression-based GCRL approaches~\cite{ghosh2021learning, yang2022rethinking} do not enjoy this reduction and obtain much weaker theoretical guarantees. 

Finally, we show that GoFAR can also be used to learn a goal-conditioned \textit{planner}~\cite{nasiriany2019planning, chane2021goal}. The key insight lies in observing that, under a mild assumption, GoFAR's dual value function objective does not depend on the action and can learn from state-only offline data. This enables learning a \textit{goal-centric} value function and thereby a near-optimal goal-conditioned planner that is capable of zero-shot transferring to new domains of the same task that shares the goal space. In our experiments, we illustrate that GoFAR planner can indeed plan effective subgoals in a new target domain and enable a low-level controller to reach distant goals that it is not designed for.

We extensively evaluate GoFAR on a variety of offline GCRL environments of varying task complexity and dataset composition, and show that it outperforms all baselines in all settings. Notably, GoFAR is more robust to stochastic environments than methods that depend on hindsight relabeling. We additionally demonstrate that GoFAR learns complex manipulation behavior on a real-world high-dimensional dexterous manipulation task~\cite{ahn2020robel}, for which most baselines fail to make any progress. Finally, we showcase GoFAR's planning capability in a cross-robot zero-shot transfer task. To summarize, our contributions are: 
\begin{enumerate}[topsep=0pt,itemsep=0ex,partopsep=1ex,parsep=0ex]
\item GoFAR, a novel offline GCRL algorithm derived from goal-conditioned state-matching
\item Detailed technical derivation and analysis of GoFAR's distinct features: (1) relabeling-free, (2) uninterleaved optimization, and (3) planning capability
\item Extensive experimental evaluation of GoFAR, validating its empirical gains and capabilities
\end{enumerate}

\section{Related Work} 
\para{Offline Goal-Conditioned Reinforcement Learning.} One core challenge of (offline) GCRL~\cite{kaelbling1993learning, schaul2015universal} is the sparse-reward nature of goal-reaching tasks. A popular strategy is hindsight goal relabeling~\cite{andrychowicz2017hindsight} (or HER), which relabels off-policy transitions with future goals they have achieved instead of the original commanded goals. Existing offline GCRL algorithms~\cite{chebotar2021actionable, yang2022rethinking} adapt HER-based online GCRL algorithms to the offline setting by incorporating additional components conducive to offline training. \cite{chebotar2021actionable} builds on actor-critic style GCRL algorithms~\cite{andrychowicz2017hindsight} by adding conservative Q-learning~\cite{kumar2020conservative} as well as goal-chaining, which expands the pool of candidate relabeling goals to the entire offline dataset.

 Besides actor-critic methods, goal-conditioned behavior cloning, coupled with HER, has been shown to be a simple and effective method~\cite{ghosh2021learning}. \cite{yang2022rethinking} improves upon~\cite{ghosh2021learning} by incorporating discount-factor and advantage function weighting~\cite{peng2019advantage, peters2010relative}, and shows improved performance in offline GCRL. Diverging from existing literature, GoFAR does not tackle offline GCRL by adapting an online algorithm; instead, it approaches offline GCRL from a novel perspective of state-occupancy matching and derives an elegant algorithm from the dual formulation that is naturally suited for the offline setting and carries many distinct properties that make it more stable, scalable, and versatile.

\para{State-Occupancy Matching.}
State occupancy matching objectives have been shown to be effective for learning from observations~\cite{ma2022smodice, zhu2020off}, exploration~\cite{lee2019efficient}, as well as matching a hand-designed target state distribution~\cite{ghasemipour2019divergence}; occupancy matching, in general, has been explored in the imitation learning literature~\cite{ho2016generative, torabi2018generative, ghasemipour2019divergence, ke2020imitation}. Our strategy of estimating the state-occupancy ratio is related to DICE techniques for offline policy optimization~\cite{nachum2019dualdice, nachum2019algaedice, lee2021optidice, kim2022demodice, ma2022smodice}. The closest work to ours is~\cite{ma2022smodice}, which proposed $f$-divergence state-occupancy matching for offline imitation learning. Our work shows that such a state-occupancy matching approach can be applied to GCRL. Algorithmically, GoFAR differs from~\cite{ma2022smodice} in that it does not require a separate dataset of expert demonstrations; it achieves this by modifying the discriminator, and in some settings can even be used without a discriminator, whereas a discriminator is always required by~\cite{ma2022smodice}. Furthermore, we derive several new algorithmic properties (Section~\ref{section:optimal-goal-weighting}-\ref{section:goal-conditioned-planning}) that are novel and particularly advantageous for offline GCRL.

\section{Problem Formulation}
\para{Goal-Conditioned Reinforcement Learning.} We consider an infinite-horizon Markov decision process (MDP)~\cite{puterman2014markov} $\mc{M}=(S, A,R,T, \mu_0, \gamma)$ with state space $S$, action space $A$, deterministic rewards $r(s,a)$, stochastic transitions $s' \sim T(s,a)$, initial state distribution $\mu_0(s)$, and discount factor $\gamma \in (0, 1]$. A policy $\pi:S \rightarrow \Delta(A)$ outputs a distribution over actions to use in a given state. 
In goal-conditioned RL, the MDP additionally assumes a goal space $G \coloneqq \{\phi(s) \mid s \in S\}$, where the state-to-goal mapping $\phi:S \rightarrow G$ is known. Now, the reward function $r(s;g)$\footnote{In GCRL, the reward function customarily does not depend on action.} as well as the policy $\pi(a\mid s,g)$ depend on the commanded goal $g \in G$. Given a distribution over desired goals $p(g)$, the objective of goal-conditioned RL is to find a policy $\pi$ that maximizes the discounted return:
\vspace{-0.2cm}
\begin{equation}
\label{eq:gcrl-objective}
    J(\pi) \coloneqq \mathbb{E}_{g\sim p(g), s_0\sim \mu_0, a_t \sim \pi(\cdot \mid s_t, g), s_{t+1}\sim T(\cdot \mid s_t, a_t)}\left[\sum_{t=0}^{\infty} \gamma^t r(s_t; g) \right]
\end{equation}
\vspace{-0.2cm}
The \textit{goal-conditioned} state-action occupancy distribution $d^\pi(s,a;g): S \times A \times G \rightarrow [0,1]$ of $\pi$ is
\begin{equation}
\label{eq:pi-occupancies}
\begin{split}
d^\pi(s,a;g) \coloneqq (1-\gamma) \sum_{t=0}^{\infty} \gamma^t \mathrm{Pr}(s_t=s, a_t=a \mid s_0 \sim \mu_0, a_t \sim \pi(s_t;g), s_{t+1} \sim T(s_t,a_t))
\end{split}
\end{equation}
which captures the relative frequency of state-action visitations for a policy $\pi$ conditioned on goal $g$. The state-occupancy distribution then marginalizes over actions: $d^\pi(s;g) = \sum_a d^\pi(s,a;g)$. Then, it follows that $\pi(a\mid s,g) = \frac{d^\pi(s,a;g)}{d^\pi(s;g)}$. A state-action occupancy distribution must satisfy the \textit{Bellman flow constraint} in order for it to be an occupancy distribution for some stationary policy $\pi$: 
\begin{equation}
\label{eq:bellman-flow-constraint}
\sum_a d(s,a;g) = (1-\gamma) \mu_0(s) + \gamma \sum_{\tilde{s}, \tilde{a}}T(s\mid \tilde{s}, \tilde{a}) d(\tilde{s}, \tilde{a}; g), \qquad\forall s \in S, g\in G 
\end{equation}
We write $d^\pi(s,g) = p(g)d^\pi(s;g)$ as the joint goal-state density induced by $p(g)$ and the policy $\pi$.
Finally, given $d^\pi$, we can express the objective function~\eqref{eq:gcrl-objective} as $J(\pi) = \frac{1}{1-\gamma} \mathbb{E}_{(s,g) \sim d^\pi(s,g)}[r(s;g)]$. 

\para{Offline GCRL.} In offline GCRL, the agent cannot interact with the environment $\mc{M}$; instead, it is equipped with a static dataset of logged transitions $\mc{D} \coloneqq \{\tau_i\}_{i=1}^N$, where each trajectory $\tau^{(i)} = (s_0^{(i)},a_0^{(i)}, r_0^{(i)}, s_1^{(i)},...; g^{(i)})$ with $s_0^{(i)} \sim \mu_0$ and $g^{(i)}$ is the commanded goal of the trajectory. Note that trajectories need not to be generated from a \textit{goal-directed} agent, in which case $g^{(i)}$ can be randomly drawn from $p(g)$. We denote the empirical goal-conditioned state-action occupancies of $\mc{D}^O$ as $d^O(s,a;g)$.

\section{Goal-Conditioned $f$-Advantage Regression}
In this section, we introduce Goal-conditioned $f$-Advantage Regression (GoFAR). We first derive the algorithm in full (Section~\ref{section:gofar-algorithm}), then delve deep into its several appealing properties (Section~\ref{section:optimal-goal-weighting}-\ref{section:goal-conditioned-planning}).

\subsection{Algorithm}
\label{section:gofar-algorithm}
We first show that goal-conditioned state-occupancy matching is a mathematically principled approach for solving general GCRL problems, formalizing the teleportation intuition in the introduction.
\begin{proposition}
\label{proposition:gcrl-objective}
Given any $r(s;g)$, for each $g$ in the support of $p(g)$, define $p(s;g) = \frac{e^{r(s;g)}}{Z(g)}$, where $Z(g) := \int e^{r(s;g)} ds$ is the normalizing constant. Then, the following equality holds: 
\begin{equation}
    -\D_\KL(d^\pi(s;g) \| p(s;g)) + C = (1-\gamma)J(\pi) + \mathcal{H}(d^\pi(s;g)) 
\end{equation}
where $J(\pi)$ is the GCRL objective (Eq.~\eqref{eq:gcrl-objective}) with reward $r(s;g)$ and $C := \mathbb{E}_{g \sim p(g)}[\log Z(g)]$. 
\end{proposition}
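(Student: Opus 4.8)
The plan is to prove the identity by directly expanding the definition of the Kullback--Leibler divergence and substituting the closed form $p(s;g) = e^{r(s;g)}/Z(g)$. No duality or optimization is needed here: the statement is essentially a pointwise algebraic identity in $g$ that is then integrated against $p(g)$.

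First I would fix a goal $g$ in the support of $p(g)$ and write
\begin{equation*}
\D_\KL(d^\pi(s;g) \,\|\, p(s;g)) = \mathbb{E}_{s \sim d^\pi(s;g)}\left[\log d^\pi(s;g) - \log p(s;g)\right].
\end{equation*}
The first term contributes $-\mathcal{H}(d^\pi(s;g))$ by definition of (differential) entropy. For the second term, substitute $\log p(s;g) = r(s;g) - \log Z(g)$, which splits it into $\mathbb{E}_{s \sim d^\pi(s;g)}[r(s;g)]$ and the constant $\log Z(g)$ (the latter factors out of the expectation since it does not depend on $s$). Rearranging gives the per-goal identity
\begin{equation*}
-\D_\KL(d^\pi(s;g)\,\|\,p(s;g)) + \log Z(g) = \mathbb{E}_{s \sim d^\pi(s;g)}[r(s;g)] + \mathcal{H}(d^\pi(s;g)).
\end{equation*}

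Next I would take the expectation of both sides over $g \sim p(g)$. On the left, $\mathbb{E}_{g \sim p(g)}[\log Z(g)] = C$ by definition of $C$. On the right, using $d^\pi(s,g) = p(g)\, d^\pi(s;g)$ together with the occupancy-form expression for the return already established in the problem formulation, $\mathbb{E}_{g\sim p(g)}\mathbb{E}_{s\sim d^\pi(s;g)}[r(s;g)] = \mathbb{E}_{(s,g)\sim d^\pi(s,g)}[r(s;g)] = (1-\gamma)J(\pi)$. Identifying $\mathbb{E}_{g\sim p(g)}[\D_\KL(d^\pi(s;g)\|p(s;g))]$ and $\mathbb{E}_{g\sim p(g)}[\mathcal{H}(d^\pi(s;g))]$ with the ($p(g)$-averaged) quantities written in the statement then yields the claimed equality.

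I do not expect a genuine obstacle; the only points requiring care are bookkeeping. (i) One must note that $p(s;g)$ is a bona fide probability density, integrating to $1$ by construction of $Z(g)$, so the KL divergence on the left is well defined whenever $Z(g) < \infty$; absolute continuity $d^\pi(s;g) \ll p(s;g)$ holds automatically since $p(s;g) > 0$ wherever $e^{r(s;g)} > 0$. (ii) One must read $\D_\KL(d^\pi(s;g)\|p(s;g))$ and $\mathcal{H}(d^\pi(s;g))$ in the statement as the $p(g)$-averaged (conditional) divergence and entropy, which is the natural convention given the appearance of $C = \mathbb{E}_{g\sim p(g)}[\log Z(g)]$ and the fact that $J(\pi)$ itself already averages over $g \sim p(g)$.
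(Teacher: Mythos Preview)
Your proposal is correct and essentially mirrors the paper's own proof: both arguments reduce to substituting $\log p(s;g) = r(s;g) - \log Z(g)$ and identifying the resulting terms as $(1-\gamma)J(\pi)$, the conditional entropy, and the constant $C$. The only cosmetic difference is the starting point---you expand the KL divergence and rearrange, whereas the paper starts from $(1-\gamma)J(\pi) = \mathbb{E}_{g\sim p(g)}\mathbb{E}_{s\sim d^\pi(s;g)}[r(s;g)]$ and multiplies by $Z(g)/Z(g)$ and $d^\pi(s;g)/d^\pi(s;g)$ to reveal the KL and entropy terms; the algebra is the same either way.
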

See Appendix~\ref{appendix:probabilistic-gcrl} for proof. This proposition states that, for any choice of reward $r(s;g)$, solving the GCRL problem with a maximum state-entropy regularization is equivalent to optimizing for the goal-conditioned state-occupancy matching objective with target distribution $p(s;g) := \frac{e^{r(s;g)}}{Z(g)}$.
Now, the key challenge with optimizing this objective offline is that we cannot sample from $d^\pi(s;g)$.
To address this issue, following~\cite{ma2022smodice}, we first derive an offline lower bound involving an $f$-divergence (see Appendix~\ref{appendix:additional-technical-background} for definition) regularization term, which subsequently enables solving this optimization problem via its dual using purely offline data: 
\begin{proposition}
\label{proposition:offline-upper-bound}
Assume\footnote{This assumption is only needed in our technical derivation to avoid division-by-zero issue.} for all $g$ in support of $p(g)$, $\forall s, d^O(s;g) > 0$ if $p(s;g) > 0$. Then, for any $f$-divergence that upper bounds the KL-divergence, 
\begin{align}
\label{eq:f-divergence-bound}
    -\D_\KL(d^\pi(s;g) \| p(s;g)) &\geq \mathbb{E}_{(s,g) \sim d^\pi(s,g)}\left[\log \frac{p(s;g)}{d^O(s;g)}\right] - \D_f(d^\pi(s,a;g) \| d^O(s,a;g))
\end{align}
\end{proposition}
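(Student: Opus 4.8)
The plan is to introduce the data occupancy $d^O$ as an intermediate reference inside the KL divergence, which splits the intractable objective into a ``surrogate reward'' term that can be estimated from $\mc{D}$ and a residual state-marginal KL term, and then to absorb the residual into the $f$-divergence between the full state-action occupancies.

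First, I would fix $g$ in the support of $p(g)$. Writing out the KL divergence and inserting $\log d^O(s;g)$,
\begin{equation*}
-\D_\KL(d^\pi(s;g)\,\|\,p(s;g)) = \mathbb{E}_{s \sim d^\pi(s;g)}\!\left[\log\frac{p(s;g)}{d^O(s;g)}\right] - \D_\KL(d^\pi(s;g)\,\|\,d^O(s;g)),
\end{equation*}
where the support assumption ($d^O(s;g)>0$ whenever $p(s;g)>0$) guarantees $\log(p/d^O)$ is well-defined. Taking $\mathbb{E}_{g\sim p(g)}$ of both sides turns the first term into $\mathbb{E}_{(s,g)\sim d^\pi(s,g)}[\log(p(s;g)/d^O(s;g))]$, which is exactly the first term on the right-hand side of the claim; so it remains to upper bound $\mathbb{E}_{g\sim p(g)}[\D_\KL(d^\pi(s;g)\|d^O(s;g))]$ by $\D_f(d^\pi(s,a;g)\|d^O(s,a;g))$.

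I would do this in two elementary steps. (i) Using $\pi(a\mid s,g) = d^\pi(s,a;g)/d^\pi(s;g)$ and similarly for the data policy $\pi^O(a\mid s,g) = d^O(s,a;g)/d^O(s;g)$, the chain rule for KL gives
\begin{equation*}
\D_\KL(d^\pi(s,a;g)\,\|\,d^O(s,a;g)) = \D_\KL(d^\pi(s;g)\,\|\,d^O(s;g)) + \mathbb{E}_{s\sim d^\pi(s;g)}\!\left[\D_\KL(\pi(\cdot\mid s,g)\,\|\,\pi^O(\cdot\mid s,g))\right] \ge \D_\KL(d^\pi(s;g)\,\|\,d^O(s;g)),
\end{equation*}
since the conditional-KL term is nonnegative (equivalently, marginalizing over actions cannot increase KL). (ii) By hypothesis the chosen $f$-divergence dominates the KL divergence pointwise in its arguments, so $\D_\KL(d^\pi(s,a;g)\|d^O(s,a;g)) \le \D_f(d^\pi(s,a;g)\|d^O(s,a;g))$. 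Chaining (i) and (ii), then averaging over $g\sim p(g)$ and using the conditional $f$-divergence identity $\mathbb{E}_{g\sim p(g)}[\D_f(d^\pi(\cdot,\cdot;g)\|d^O(\cdot,\cdot;g))] = \D_f(d^\pi(s,a;g)\|d^O(s,a;g))$ (both sides sharing the $g$-marginal $p(g)$), yields the required bound; substituting it back into the first display finishes the proof.

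I expect the main obstacle to be bookkeeping rather than any single inequality: one must keep the conditioning on $g$ and the absolute-continuity/support conditions straight throughout so that every KL term is finite and $\log(p/d^O)$ is defined, and one must fix a consistent reading of the right-hand side $f$-divergence (as the expectation over $g\sim p(g)$ of the conditional $f$-divergences). It is also worth double-checking the direction: since $-\D_\KL(d^\pi\|p)$ equals the surrogate term \emph{minus} $\D_\KL(d^\pi(s;g)\|d^O(s;g))$, upper-bounding this subtracted term by the $f$-divergence indeed produces a \emph{lower} bound on the left-hand side, as claimed.
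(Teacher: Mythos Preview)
Your proposal is correct and follows essentially the same route as the paper: insert $d^O(s;g)$ into the log to split $-\D_\KL(d^\pi(s;g)\|p(s;g))$ into the surrogate-reward term minus $\D_\KL(d^\pi(s;g)\|d^O(s;g))$, then bound the latter first by the state-action KL (data-processing / chain rule) and then by $\D_f$. The only cosmetic difference is that the paper establishes the marginalization inequality via an auxiliary lemma about $(s,a,s')$ occupancies, whereas you invoke the KL chain rule directly---arguably the cleaner presentation.
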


The RHS of \eqref{eq:f-divergence-bound} can be understood as an $f$-divergence regularized GCRL objective with reward function $R(s;g) = \log \frac{p(s;g)}{d^O(s;g)}$ (we use capital $R$ to differentiate user-chosen reward $R$ from the environment reward $r$). 
Intuitively, this reward encourages visiting states that occur more often in the ``expert'' state distribution $p(s;g)$ than in the offline dataset, and the $f$-divergence regularization then ensures that the learned policy is supported by the offline dataset.
This choice of reward function can be estimated in practice by training a discriminator~\cite{goodfellow2014generative} $c: S \times G \rightarrow (0,1)$ using the offline data:
\begin{equation}
\label{eq:discriminator-training}
\min_c \BE_{g\sim p(g)} \left[ \BE_{p(s;g)}\left[\log c(s, g) \right] + \BE_{d^O(s;g)}\left[\log 1-c(s, g) \right]\right]
\end{equation}
We can in fact obtain a looser lower bound that does not require training a discriminator (see~\ref{appendix:proof} for a derivation):
\begin{equation}
        \label{eq:f-divergence-bound-binary}
   -\D_\KL(d^\pi(s;g) \| p(s;g)) \geq \mathbb{E}_{(s,g) \sim d^\pi(s,g)}\left[\log p(s;g)\right] - \D_f(d^\pi(s,a;g) \| d^O(s,a;g))
\end{equation}
Because $p(s;g) \propto e^{r(s;g)}$, we may substitute $R(s;g) := r(s;g)$ (when the offline dataset contains reward labels) for $\log p(s;g)$ and bypass having to train a discriminator for reward.

Now, for either choice of lower bound, we may pose the optimization problem with respect to valid choices of state-action occupancies directly, introducing the Bellman flow constraint~\eqref{eq:bellman-flow-constraint}:
\begin{equation}
    \label{eq:f-divergence-primal-problem} 
    \begin{split}
        \max_{d(s,a;g)\geq 0} & \quad\mathbb{E}_{(s,g) \sim d(s,g)}\left[r(s;g)\right] - \D_f(d(s,a;g) \| d^O(s,a;g)) \\ 
        (\mathrm{P})\quad \quad \mathrm{s.t.} & \quad \sum_a d(s,a;g) = (1-\gamma) \mu_0(s) + \gamma \sum_{\tilde{s}, \tilde{a}}T(s\mid \tilde{s}, \tilde{a}) d(\tilde{s}, \tilde{a}; g), \forall s \in S, g\in G 
    \end{split}
\end{equation}

This reformulation does not solve the fundamental problem that~\eqref{eq:f-divergence-primal-problem} still requires sampling from $d(s;g)$; however, it has now written the problem in a way amenable to simplification using tools from convex analysis. 
Now, we show that its dual problem can be reduced to an \textit{unconstrained} minimization problem over the dual variables which serve the role of a value function; importantly, the optimal solution to the dual problem can be used to directly retrieve the optimal primal solution:

\begin{proposition}
\label{proposition:dual}
The dual problem to~\eqref{eq:f-divergence-primal-problem} is 
\begin{equation}
\label{eq:V-problem}
     (\mathrm{D}) \quad \min_{V(s,g)\geq0}  (1-\gamma)\BE_{(s,g)\sim \mu_0,p(g)}[V(s;g)] +\BE_{(s,a,g) \sim d^O}\left[f_\star \left(R(s;g) + \gamma \mathcal{T}V(s,a;g) - V(s;g)\right)\right],
\end{equation}
where $f_\star$ denotes the convex conjugate function of $f$, $V(s;g)$ is the Lagrangian vector, and $\mathcal{T}V(s,a;g) = \mathbb{E}_{s'\sim T(\cdot \mid s,a)}[V(s';g)]$. Given the optimal $V^*$, the primal optimal $d^*$ satisfies:
\begin{equation}
\label{eq:d-star}
    d^*(s,a;g) = d^O(s,a;g) f'_\star\left(R(s;g) + \gamma \mathcal{T}V^*(s,a;g) - V^*(s;g)\right), \forall s\in S, a\in A, g\in G
\end{equation}
\end{proposition}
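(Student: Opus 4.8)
The plan is to apply Lagrangian duality to the convex program $(\mathrm{P})$, exactly along the lines of the DICE-style derivations in the cited work. First I would observe that $(\mathrm{P})$ is a concave maximization over a convex feasible set: the reward term is linear in $d$, $-\D_f(d\|d^O)$ is concave (as $f$ is convex), the Bellman-flow constraints are affine in $d$, and $d\ge 0$ is a convex cone. I would then introduce a multiplier $V(s;g)$ for the Bellman-flow equality at each pair $(s,g)$, weighting each constraint by $p(g)$ so the resulting terms appear as expectations under $p(g)$, and form the Lagrangian $\mc{L}(d,V)$. A short manipulation --- pulling the transition sum $\sum_{\tilde s,\tilde a}T(s\mid\tilde s,\tilde a)d(\tilde s,\tilde a;g)$ inside to form $\mathcal{T}V(s,a;g)=\BE_{s'\sim T(\cdot\mid s,a)}[V(s';g)]$, and collecting the coefficient of each $d(s,a;g)$ --- rewrites it as
\begin{multline*}
\mc{L}(d,V) = (1-\gamma)\BE_{(s,g)\sim \mu_0,p(g)}[V(s;g)] \\ + \BE_{(s,a,g)\sim d}\big[R(s;g)+\gamma\mathcal{T}V(s,a;g)-V(s;g)\big] - \D_f(d\|d^O).
\end{multline*}

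Next I would invoke strong duality to swap $\max_d\min_V$ into $\min_V\max_d$; this is the standard occupancy-measure LP-duality argument (affine constraints plus a feasible primal), and I would cite the analogous statements in the referenced DICE papers. For a fixed $V$, the inner problem $\max_{d\ge0}\mc{L}(d,V)$ decouples over $(s,a,g)$ once written in terms of the density ratio $w(s,a;g)=d(s,a;g)/d^O(s,a;g)$, well-defined on $\mathrm{supp}(d^O)$ --- the support region that matters, by the standing assumption of Proposition~\ref{proposition:offline-upper-bound}. Setting $y(s,a;g):=R(s;g)+\gamma\mathcal{T}V(s,a;g)-V(s;g)$, the inner problem becomes $\BE_{(s,a,g)\sim d^O}\!\big[\sup_{w\ge0}\{\,w\,y(s,a;g)-f(w)\,\}\big]$; with the usual convention that $f\equiv+\infty$ on $\mathbb{R}_{<0}$, the pointwise supremum is precisely the convex conjugate $f_\star(y(s,a;g))$, so the inner value equals $\BE_{(s,a,g)\sim d^O}[f_\star(y(s,a;g))]$ and we recover $(\mathrm{D})$. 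The first-order stationarity condition for that pointwise supremum is $y=f'(w^\star)$, i.e.\ $w^\star=(f')^{-1}(y)=f'_\star(y)$ (using strict convexity and differentiability of $f$); evaluated at the optimal dual solution $V^*$ this yields $d^*(s,a;g)=d^O(s,a;g)\,f'_\star\big(R(s;g)+\gamma\mathcal{T}V^*(s,a;g)-V^*(s;g)\big)$, which is~\eqref{eq:d-star}. To close the argument I would check that this $d^*$ satisfies the Bellman-flow constraints and attains the primal optimum (it does, being the primal point recovered from the KKT system).

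I expect the main obstacles to be technical rather than conceptual: (i) rigorously justifying strong duality and the interchange of supremum with expectation (pointwise decomposition of the inner maximization) in a possibly continuous state--action space; and (ii) fixing the precise convention on $\D_f$ --- the extended-value generator $f$ --- under which the $d\ge0$ constraint is absorbed into $f_\star$ and $(f')^{-1}=f'_\star$ holds, which also dictates the regularity assumptions on $f$ (strictly convex, differentiable on the relevant domain). The non-negativity restriction $V(s;g)\ge0$ in $(\mathrm{D})$ is a comparatively minor point: it holds without loss for the reward choices considered here (so the optimal dual variables are already non-negative), and can otherwise be carried through by treating the corresponding Lagrange term as an inequality-constraint multiplier; I would verify whichever route is cleanest.
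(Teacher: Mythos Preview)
Your proposal is correct and follows essentially the same route as the paper's proof: form the Lagrangian with multipliers $p(g)V(s;g)$, rearrange the transition term into $\mathcal{T}V$, and then reduce the inner maximization over $d$ to a Fenchel conjugate. The only cosmetic difference is that the paper invokes a pre-stated lemma giving the Fenchel conjugate of $\D_f(\cdot\|d^O)$ directly (together with its maximizer $p^*(x)=q(x)f'_\star(y(x))$), whereas you unpack that lemma by passing to the density ratio $w=d/d^O$ and doing the pointwise $\sup_{w\ge0}\{wy-f(w)\}$ computation explicitly; these are the same argument at different levels of packaging.
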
 

A proof is given in Appendix~\ref{appendix:proof}. Crucially, as neither expectation in~\eqref{eq:V-problem} depends on samples from $d$, this objective can be estimated entirely using offline data, making it suitable for offline GCRL. For tabular MDPs, we show that for a suitable choice of $f$-divergence, the optimal $V^*$ in fact admits closed-form solutions; see Appendix~\ref{appendix:gofar-tabular-mdp} for details. In the continuous control setting, we can optimize \eqref{eq:V-problem} using stochastic gradient descent (SGD) by parameterizing $V$ using a deep neural network.

Then, once we have obtained the optimal (resp. converged) $V^*$,
we propose learning the policy via the following supervised regression update: 
\begin{equation}
    \label{eq:f-advantage-regression}
    \max_\pi \mathbb{E}_{g \sim p(g)}\mathbb{E}_{(s,a) \sim d^O(s,a;g)}\left[\left(f'_\star(R(s;g) + \gamma \mathcal{T}V^*(s,a;g)-V^*(s;g)\right)\log \pi(a\mid s, g) \right] 
\end{equation}
We see that the regression weights are the first-order derivatives of the convex conjugate of $f$ evaluated at the \textit{dual optimal advantage}, $R(s;g) + \gamma \mathcal{T}V^*(s,a;g)-V^*(s;g)$; we refer to this weighting term as \textit{$f$-advantage}. Hence, we name our overall method \textit{Goal-conditioned $f$-Advantage Regression (GoFAR)}; an abbreviated high-level pseudocode is provided in Algorithm~\ref{alg:gofar-deep-abbreviated} and a detailed version is provided in Appendix~\ref{appendix:gofar-details}. In practice, we implement GoFAR with $\chi^2$-divergence, a choice of $f$-divergence that is stable for off-policy optimization~\cite{zhu2020off, ma2022smodice, lee2021optidice}; see Appendix~\ref{appendix:gofar-details} for details.

\setlength\intextsep{0pt}
\begin{wrapfigure}{L}{0.60\linewidth}
\begin{minipage}{0.6\textwidth}
\begin{algorithm}[H]
\caption{Goal-Conditioned $f$-Advantage Regression (Abbreviated); 3-disjoint steps}\label{alg:gofar-deep-abbreviated}
\begin{algorithmic}[1]
\STATE (Optionally) Train discriminator-based reward~\eqref{eq:discriminator-training}
\STATE Train (optimal) dual value function $V^*(s;g)$~\eqref{eq:V-problem} 
\STATE Train policy $\pi$ via $f$-Advantage Regression \eqref{eq:f-advantage-regression}
\end{algorithmic}
\end{algorithm}
\vspace{0.2cm}
\end{minipage}
\end{wrapfigure}

Note that~\eqref{eq:f-advantage-regression} forgos directly minimizing~\eqref{eq:state-matching-objective} offline, which has been found to suffer from training instability~\cite{kim2022demodice}. Instead, it naturally incorporates the primal-dual optimal solutions in a regression loss. Now, we will show that this policy objective has several theoretical and practical benefits for offline GCRL that make it particularly appealing.  

\subsection{Optimal Goal-Weighting Property}
\label{section:optimal-goal-weighting}

We show that optimizing~\eqref{eq:f-advantage-regression} automatically obtains the \textit{optimal goal-weighting distribution}. That is, GoFAR trains its policy as if all the data is coming from the optimal goal-conditioned policy for~\eqref{eq:f-divergence-primal-problem}. In particular, this property is achieved without any explicit hindsight relabeling (see Appendix~\ref{appendix:additional-technical-background} for a technical definition), a mechanism that prior works heavily depend on.
To this end, we first define $p(g\mid s,a)$ to be the conditional distribution of goals in the offline dataset conditioned on state-action pair $(s,a)$. Then, according to Bayes rule, we have that
\begin{equation}
    p(g\mid s,a) = \frac{d^O(s,a;g) p(g)}{d^O(s,a)} \Rightarrow p(g)d^O(s,a;g) = p(g\mid s,a) d^O(s,a)
\end{equation}
Using this equality, we can rewrite the policy objective~\eqref{eq:f-advantage-regression} as follows:
\begin{align}
       &\min_\pi  -\mathbb{E}_{(s,a) \sim d^O(s,a)} \mathbb{E}_{g \sim p(g \mid s,a)} \left[\left(f'_\star(R(s;g) + \gamma \mathcal{T}V^*(s,a;g)-V^*(s;g)\right)\log \pi(a\mid s, g) \right] \\ 
       \label{eq:pi-star-intermediate}
       = &\min_\pi -\mathbb{E}_{(s,a) \sim d^O(s,a)} \mathbb{E}_{g \sim \tilde{p}(g \mid s,a)} \left[\log \pi(a\mid s, g) \right]
\end{align}
where 
\begin{equation}
\label{eq:optimal-goal-weighting-distribution}
\tilde{p}(g\mid s,a) \propto p(g\mid s,a) \left(f'_\star(R(s;g) + \gamma \mathcal{T}V^*(s,a;g)-V^*(s;g)\right) = p(g\mid s,a) \frac{d^*(s,a;g)}{d^O(s,a;g)}
\end{equation}
Thus, we see that GoFAR's $f$-advantage weighting scheme is equivalent to performing supervised policy regression where goals are sampled from $\tilde{p}(g\mid s,a)$. Now, combining~\eqref{eq:optimal-goal-weighting-distribution} and Bayes rule gives 
\begin{equation}
    d^O(s,a)\tilde{p}(g\mid s,a) \propto d^O(s,a) p(g\mid s,a) \frac{d^*(s,a;g)}{\frac{p(g\mid s,a) d^O(s,a)}{p(g)}} = p(g)d^*(s,a;g) 
\end{equation}
Thus, we can replace the nested expectations in~\eqref{eq:pi-star-intermediate} and obtain that GoFAR policy update amounts to supervised regression of the state-action occupancy distribution of the \textit{optimal} policy to the regularized GCRL problem~\eqref{eq:f-divergence-primal-problem}:
\begin{equation}
\label{eq:pi-gofar}
    \pi_{\mathrm{GoFAR}} = \min_\pi -\mathbb{E}_{g \sim p(g)} \mathbb{E}_{(s,a) \sim d^*(s,a;g)} \left[\log \pi(a\mid s, g) \right]
\end{equation}

This derivation makes clear GoFAR's connection with the hindsight goal relabeling mechanism~\cite{andrychowicz2017hindsight} that is ubiquitous in GCRL: GoFAR automatically performs the optimal goal-weighting policy update without any explicit goal relabeling. Furthermore, our derivation also suggests why hindsight relabeling is sub-optimal without further assumption on the reward function~\cite{eysenbach2020rewriting}: it heuristically chooses $\tilde{p}(g\mid s,a)$ to be the empirical trajectory-wise future achieved goal distribution (i.e., HER), which generally does not coincide with the goals that the optimal policy would reach; see Appendix~\ref{appendix:her-optimality} for further discussion.

\subsection{Uninterleaved Optimization and Performance Guarantee}
An additional algorithmic advantage of GoFAR is that it \textit{disentangles} the optimization of the value network and the policy network. This can be observed by noting that GoFAR's advantage term~\eqref{eq:d-star} is computed using $V^*$, the \textit{optimal} solution of the dual problem. This has the practical significance of disentangling the value-function update~\eqref{eq:V-problem} from the policy update~\eqref{eq:f-advantage-regression}, as we do not need to optimize the latter until the former has converged. This disentanglement is in sharp contrast to prior GCRL works~\cite{yang2022rethinking, chebotar2021actionable, eysenbach2020c, andrychowicz2017hindsight}, which typically involve alternating updates to the critic Q-network and the policy network, a training procedure that has found to be unstable in the offline setting~\cite{kumar2019stabilizing}. This is unavoidable for prior works because their advantage functions are estimated using the Q-estimate of the \textit{current} policy, whereas our advantage term naturally falls out from primal-dual optimality.

The uninterleaved and relabeling-free nature of GoFAR also allows us to derive strong performance guarantees.
Because $V^*$ is fixed in~\eqref{eq:f-advantage-regression}, this policy objective amounts to a \textit{weighted} supervised learning problem. Therefore, we can extend and adapt mature theoretical results for analyzing Behavior Cloning~\cite{agarwal2019reinforcement, ross2011reduction, xu2020error} as well as finite sample error guarantees for weighted regression~\cite{cortes2010learning} to obtain statistical guarantees on GoFAR's performance with respect to the optimal $\pi^*$ for~\eqref{eq:f-divergence-primal-problem}:
\begin{theorem}
\label{theorem:gofar-bound}
Assume $\sup_{s,a,g} \frac{d^*(s,a;g)}{d^O(s,a;g)} \leq M$ and $\sup \lvert r(s,g) \rvert \leq R_{\max}$. Consider a policy class $\Pi:\{S\rightarrow \Delta(A)\}$ such that $\pi^* \in \Pi$. Then, for any $\delta \in (0,1]$, with probability at least $1-\delta$, GoFAR~\eqref{eq:pi-gofar} will return a policy $\hat{\pi}$ such that:
\begin{equation}
V^* - V^{\hat{\pi}} \leq \frac{2R_{\max}M}{(1-\gamma)^2} \sqrt{\frac{\ln(|\Pi|/\delta)}{N}} 
\end{equation}
\end{theorem}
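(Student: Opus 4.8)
The plan is to combine two ingredients: (i) a performance-difference bound that converts the total-variation (or, via Pinsker, the KL) distance between $\hat\pi$ and $\pi^*$ on states drawn from $d^{\pi^*}$ into a bound on $V^* - V^{\hat\pi}$, and (ii) a uniform-convergence / finite-sample argument for the weighted maximum-likelihood objective \eqref{eq:pi-gofar} that controls this statistical distance. First I would invoke the standard imitation-learning lemma (as in \cite{ross2011reduction, agarwal2019reinforcement, xu2020error}): since rewards are bounded by $R_{\max}$ and the effective horizon is $1/(1-\gamma)$, one has $V^{\pi^*} - V^{\hat\pi} \le \frac{2R_{\max}}{(1-\gamma)^2}\,\mathbb{E}_{(s,g)\sim d^{\pi^*}}\big[\mathrm{D}_{\mathrm{TV}}(\hat\pi(\cdot\mid s,g),\pi^*(\cdot\mid s,g))\big]$, with an extra $1/(1-\gamma)$ relative to the finite-horizon case to account for the discounted infinite horizon. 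Here I would identify $V^*$ in the theorem statement with $V^{\pi^*}$, the value of the optimal policy for the regularized problem \eqref{eq:f-divergence-primal-problem}.

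Next I would bound the expected TV term by a log-loss excess-risk term. By Pinsker's inequality and Jensen, $\mathbb{E}_{d^{\pi^*}}[\mathrm{D}_{\mathrm{TV}}] \le \sqrt{\tfrac12\,\mathbb{E}_{d^{\pi^*}}[\mathrm{D}_{\mathrm{KL}}(\pi^*\|\hat\pi)]}$, and the KL term is exactly the population excess log-loss of $\hat\pi$ relative to $\pi^*$ under the state-goal distribution $d^{\pi^*}(s,g)=p(g)d^*(s;g)$. The key observation, already established in Section~\ref{section:optimal-goal-weighting}, is that GoFAR's objective \eqref{eq:pi-gofar} is precisely weighted maximum likelihood whose population optimum is $\pi^*(a\mid s,g)=d^*(s,a;g)/d^*(s;g)$; the empirical version reweights the offline samples $(s,a,g)\sim d^O$ by the importance weights $w(s,a;g)=d^*(s,a;g)/d^O(s,a;g)$, which are bounded by $M$ by assumption. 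Then I would apply the finite-sample weighted-regression / ERM bound of \cite{cortes2010learning}: for a finite class $\Pi$ with $\pi^*\in\Pi$, with probability at least $1-\delta$ the empirical log-loss minimizer $\hat\pi$ satisfies an excess-risk bound scaling like $M\sqrt{\ln(|\Pi|/\delta)/N}$, the factor $M$ entering because the reweighting inflates the range/variance of the loss by at most $M$. Chaining these inequalities and absorbing constants gives $V^* - V^{\hat\pi} \le \frac{2R_{\max}M}{(1-\gamma)^2}\sqrt{\ln(|\Pi|/\delta)/N}$, as claimed (tracking constants so that the square root absorbs the Pinsker $1/2$ and the weighted-ERM constant).

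The main obstacle I anticipate is making the reduction to weighted regression fully rigorous with the right dependence on $M$: one must verify that the empirical objective \eqref{eq:f-advantage-regression} — which reweights offline data by the $f$-advantage $f'_\star(\cdot)$ rather than literally by $d^*/d^O$ — is, at the population level, the same as the importance-weighted likelihood \eqref{eq:pi-gofar}, and that the per-sample weights are genuinely bounded by $M$ so that the concentration step (a Hoeffding/Bernstein bound over $\Pi$, union-bounded over the finite class) yields the stated rate. A secondary technical point is handling the mismatch between the distribution under which the excess log-loss is controlled ($d^{\pi^*}$, equivalently $p(g)d^*(s;g)$, which is what the reweighted objective targets) and the distribution appearing in the performance-difference lemma; these coincide here precisely because $\pi^*$ induces occupancy $d^*$, so no additional concentrability coefficient beyond $M$ is needed. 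I would also note the implicit assumption that $V^*$ is the \emph{true} optimal dual value (so that the advantage weights are exact); the finite-sample error in estimating $V^*$ is out of scope for this theorem, which isolates the policy-regression error.
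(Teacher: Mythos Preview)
Your overall architecture---performance-difference lemma to reduce $V^*-V^{\hat\pi}$ to $\tfrac{2R_{\max}}{(1-\gamma)^2}\,\mathbb{E}_{d^*}\bigl[\|\pi^*(\cdot\mid s,g)-\hat\pi(\cdot\mid s,g)\|_{\mathrm{TV}}\bigr]$, followed by an importance-weighted concentration argument using $d^*/d^O\le M$---matches the paper's proof. The paper, however, does \emph{not} pass through Pinsker: after arriving at the TV expression it directly invokes ``Hoeffding's inequality with weighted empirical loss'' (citing \cite{cortes2010learning}) to conclude $\mathbb{E}_{d^*}[\|\pi^*-\hat\pi\|_{\mathrm{TV}}]\le M\sqrt{\ln(|\Pi|/\delta)/N}$ in one step.

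Your Pinsker detour, as written, introduces a genuine rate gap. You bound $\mathbb{E}[\mathrm{D}_{\mathrm{TV}}]\le\sqrt{\tfrac12\,\mathbb{E}[\mathrm{D}_{\mathrm{KL}}(\pi^*\|\hat\pi)]}$ and then state that the excess log-loss---which is precisely $\mathbb{E}_{d^*}[\mathrm{D}_{\mathrm{KL}}(\pi^*\|\hat\pi)]$---scales like $M\sqrt{\ln(|\Pi|/\delta)/N}$. Composing these two square roots yields $\mathbb{E}[\mathrm{D}_{\mathrm{TV}}]\lesssim M^{1/2}\bigl(\ln(|\Pi|/\delta)/N\bigr)^{1/4}$, i.e.\ an $N^{-1/4}$ rate and a $\sqrt{M}$ dependence, not the claimed $N^{-1/2}$ and linear $M$. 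The Pinsker square root is not a constant to be ``absorbed''; it is a functional $\sqrt{\cdot}$ that stacks with the Hoeffding $\sqrt{\cdot}$. To salvage your route you would need the \emph{fast} rate for the KL excess risk in the realizable case, $\mathbb{E}_{d^*}[\mathrm{D}_{\mathrm{KL}}]=O\bigl(M\ln(|\Pi|/\delta)/N\bigr)$ (as in the standard MLE analysis of behavior cloning in~\cite{agarwal2019reinforcement}); only then does Pinsker recover the $N^{-1/2}$ scaling. Your side observations---that the per-sample weights $f'_\star(\cdot)$ in \eqref{eq:f-advantage-regression} must be identified with $d^*/d^O$ via \eqref{eq:d-star} so the bound $M$ applies, and that $V^*$ is treated as exact---are correct and are likewise left implicit in the paper's terse argument.
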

Notably, the error shrinks as the size of the \textit{offline} data $N$ increases, requiring no dependency on access to data from the ``expert'' distribution $d^*$. This provides a theoretical basis for GoFAR's empirical scalability as we are guaranteed to obtain good results when the offline data becomes more expansive. In contrast, prior regression-based GCRL methods cannot be easily reduced to a simple weighted regression with respect to the desired goal distribution $p(g)$, 
so they only obtain weaker results under stronger assumptions on the offline data (e.g, full state-space coverage) as well as the policy; see Appendix~\ref{appendix:theoretical-comparison} for discussion.

\subsection{Goal-Conditioned Planning}
\label{section:goal-conditioned-planning}
Next, we show that GoFAR can be used to learn a goal-conditioned planner that supports zero-shot transfer to other domains of the same task. The key insight is that GoFAR's value function objective \eqref{eq:V-problem} does not depend on actions assuming deterministic transitions. This is because given a transition $(s,a,s',g) \sim d^O$, $\mathcal{T}V(s,a;g) = V(s';g)$, so we can rewrite the second term in~\eqref{eq:V-problem} as $\BE_{(s,a,s',g) \sim d^O}\left[f_\star \left(R(s;g) + \gamma V(s';g) - V(s;g)\right)\right]$, which does not depend on the action.
This property enables us to learn a \textit{goal-centric} value function that is independent of the agent's state space because the agent's actions are not relevant. Specifically, we propose the following objective: 
\begin{equation}
\label{eq:V-problem-transfer}
     \min_{V(\phi(s),g)\geq0}  (1-\gamma)\BE_{(s,g)\sim \mu_0,p(g)}[V(\phi(s);g)] +\BE_{(s, s',g) \sim d^O}\left[f_\star \left(R(s;g) + \gamma V(\phi(s'),g) - V(\phi(s),g)\right)\right]
\end{equation}
We can think of this objective as learning a value function with the inductive bias that the first operation transforms the state input to the goal space via $\phi$. Since $V$ is now independent of the agent, we can use $f$-advantage regression to instead learn a \textit{goal-conditioned planner}:
\begin{equation}
\label{eq:pi-gofar-transfer}
    \max_\pi \mathbb{E}_{g \sim p(g)}\mathbb{E}_{(s,s',g) \sim d^O}\left[\left(f'_\star(R(s;g) + \gamma V^*(\phi(s');g)-V^*(\phi(s);g)\right)\log \pi(\phi(s')\mid \phi(s), g) \right] 
\end{equation}
where $\pi$ now outputs the next subgoal $\phi(s')$ conditioned on the current achieved goal $\phi(s)$ and the desired goal $g$. In our experiments, we show how this planner can achieve hierarchical control through zero-shot transfer subgoal plans to a new target domain.

\vspace{-0.3cm}
\section{Experiments}
\vspace{-0.3cm}
 We pose the following questions and provide affirmative answers in our experiments:
\begin{enumerate}[topsep=0pt,itemsep=0ex,partopsep=1ex,parsep=0ex]
\item Is GoFAR effective for offline GCRL? What components are important for performance?
\item Is GoFAR more robust to stochastic environments than hindsight relabeling methods?
\item Can GoFAR be applied to a real-robotics system?
\item Can GoFAR learn a goal-conditioned planner for zero-shot cross-embodiment transfer?
\end{enumerate}

\subsection{Offline GCRL}
\label{section:offline-gcrl} 

\begin{wrapfigure}{r}{0.50\linewidth}
\includegraphics[width=.32\linewidth]{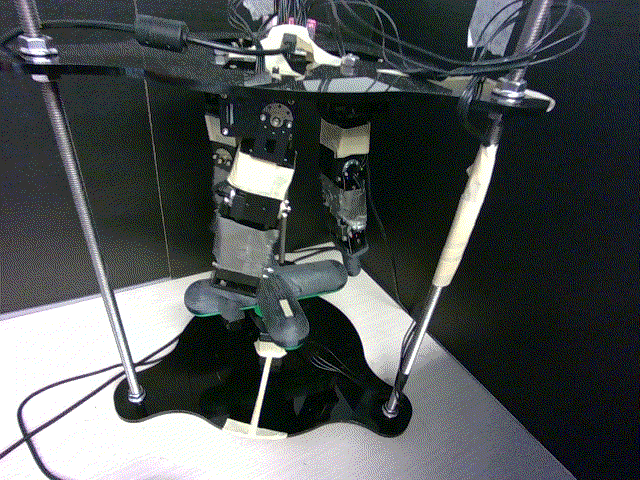} 
\includegraphics[width=.32\linewidth,trim={0 2cm 0 2.1cm},clip=True]{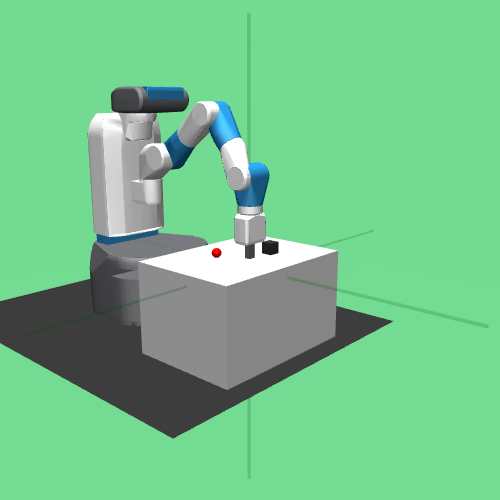} 
\includegraphics[width=.32\linewidth]{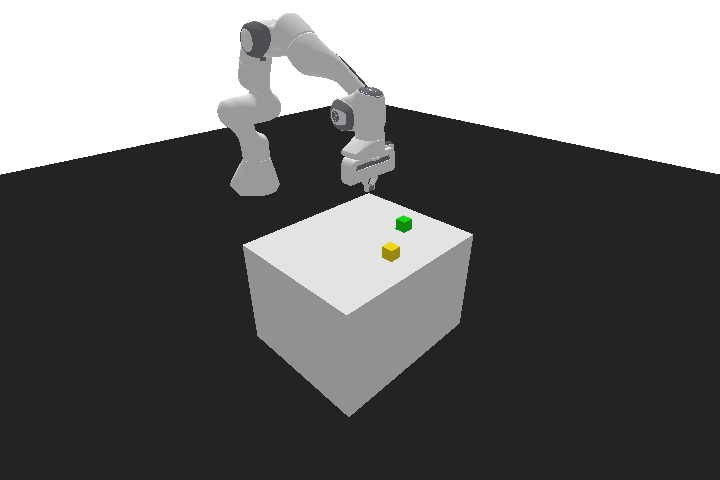}
\caption{D'Claw (left); Cross-Embodiment transfer source (middle) and target (right) domains.} 
\label{figure:fetch-panda-illustration}
\end{wrapfigure}

\para{Tasks.} In our simulation experiments, we consider six distinct environments. They include four robot manipulation environments\cite{plappert2018multi}: FetchReach, FetchPickAndPlace, FetchPush, and FetchSlide, and two dexterous manipulation environments: HandReach~\cite{plappert2018multi}, and D'ClawTurn~\cite{ahn2020robel}. All tasks use sparse reward, and their respective goal distributions are defined over valid configurations in either the robot space or object space, depending on whether the task involves object manipulation.
The offline dataset for each task is collected by either a random policy or a mixture of 90\% random policy and 10\% expert policy~\cite{ma2022smodice, kim2022demodice}, depending on whether random data provides enough coverage of the desired goal distribution. The first five tasks and their datasets are taken from~\cite{yang2022rethinking}. See Appendix~\ref{appendix:offline-gcrl} for dataset details and Appendix~\ref{appendix:task-descriptions} for detailed task descriptions and figures.

\begin{table}
\caption{Discounted Return, averaged over 10 seeds.}
\label{table:offline-gcrl-full-discounted-return}
\resizebox{\textwidth}{!}{
\begin{tabular}{l|rrr|rr}
\toprule
\multicolumn{1}{c|}{\textbf{Task}}
& \multicolumn{3}{c|}{\textbf{Supervised Learning}}
& \multicolumn{2}{c}{\textbf{Actor-Critic}} \\ 
 &  \textbf{GoFAR} (Ours) & \textbf{WGCSL} & \textbf{GCSL} & \textbf{AM} &\textbf{DDPG} \\ 
\midrule 
FetchReach  & 28.2 $\pm$ {\scriptsize0.61} & 21.9$\pm$ {\scriptsize2.13} (1.0) & 20.91 $\pm$ {\scriptsize2.78} (1.0)  & \textbf{30.1} $\pm$ {\scriptsize0.32} (0.5) & 29.8 $\pm$ {\scriptsize0.59} (0.2) \\ 
FetchPick & \textbf{19.7} $\pm$ {\scriptsize2.57} & 9.84 $\pm$ {\scriptsize2.58} (1.0) & 8.94 $\pm$ {\scriptsize3.09} (1.0) & 18.4 $\pm$ {\scriptsize3.51} (0.5) & 16.8 $\pm$ {\scriptsize3.10} (0.5) \\
FetchPush ($\star$) & \textbf{18.2} $\pm$ {\scriptsize3.00} &14.7 $\pm$ {\scriptsize2.65} (1.0) & 13.4 $\pm$ {\scriptsize3.02} (1.0)  & 14.0 $\pm$ {\scriptsize2.81} (0.5) & 12.5 $\pm$ {\scriptsize4.93} (0.5) \\
FetchSlide & 2.47 $\pm$ {\scriptsize1.44} & \textbf{2.73} $\pm$ {\scriptsize1.64} (1.0) & 1.75 $\pm$ {\scriptsize1.3}(1.0) & 1.46 $\pm$ {\scriptsize1.38} (0.5) & 1.08 $\pm$ {\scriptsize1.35} (0.5)  \\ 
\midrule 
HandReach ($\star$) & \textbf{11.5} $\pm$ {\scriptsize5.26} & 5.97 $\pm$ {\scriptsize4.81} (1.0) & 1.37 $\pm$ {\scriptsize2.21} (1.0)& 0. $\pm$ {\scriptsize0.0} (0.5) & 0.81 $\pm$ {\scriptsize1.73} (0.5)\\ 
D'ClawTurn ($\star$) & \textbf{9.34} $\pm$ {\scriptsize 3.15} & 0.0 $\pm$ {\scriptsize0.0} (1.0) & 0.0 $\pm$ {\scriptsize0.0} (1.0) & 2.82$\pm$ {\scriptsize 1.71} (1.0) & 0.0$\pm$ {\scriptsize 0.0} (0.2) \\ 
\midrule
Average Rank &  \textbf{1.5} & 3 & 4.17 & 2.83 & 4 \\ 
\bottomrule
\end{tabular}
}

\vspace{0.2cm}
\end{table}

\para{Algorithms.} 
We compare to state-of-art offline GCRL algorithms, consisting of both regression-based and actor-critic methods. The regression-based methods are: 
(1) \textbf{GCSL}~\cite{ghosh2021learning}, which incorporates hindsight relabeling in conjunction with behavior cloning to clone actions that lead to a specified goal, and (2) \textbf{WGCSL}~\cite{yang2022rethinking}, which improves upon GCSL by incorporating discount factor and advantage weighting into the supervised policy learning update. The actor-critic methods are (1) \textbf{DDPG}~\cite{andrychowicz2017hindsight}, which adapts DDPG~\cite{lillicrap2015continuous} to the goal-conditioned setting by incorporating hindsight relabeling, and (2) \textbf{ActionableModel (AM)}~\cite{chebotar2021actionable}, which incorporates conservative Q-Learning~\cite{kumar2020conservative} as well as goal-chaining on top of an actor-critic method.

We use tuned hyperparameters for each baseline on all tasks; in particular, we search the best HER ratio from $\{0.2, 0.5, 1.0\}$ for each baseline on each task separately and report the best-performing one. For GoFAR, we use identical hyperparameters as WGCSL for the shared network components and do not tune further. We train each method for 10 seeds, and each training run uses 400k minibatch updates of size 512. Complete architecture and hyperparameter table as well as additional training details are provided in Appendix~\ref{appendix:experimental-details}.

\para{Evaluations and Results.} 
We report the \textbf{discounted return} using the 
sparse binary task reward. This metric rewards algorithms that reach goals as \textit{fast} as possible and stay in the goal region thereafter. In Appendix~\ref{appendix:additional-results}, we also report the final \textbf{success rate} using the same sparse binary criterion.
Because the task reward is binary, these metrics do not take account into how \textit{precisely} a goal is being reached. Therefore, we additionally report the \textbf{final distance} to goal in Appendix~\ref{appendix:additional-results}.

The full discounted return results are shown in Table~\ref{table:offline-gcrl-full-discounted-return}; $(\star)$ indicates statistically significant improvement over the second best method under a two-sample $t$-test. The fraction inside $()$ indicates the best HER rate for each baseline. As shown, GoFAR attains the best overall performance across six tasks, and the results are statistically significant on three tasks, including the two more challenging  dexterous manipulation tasks. In Table~\ref{table:offline-gcrl-full-final-distance-appendix} in Appendix~\ref{appendix:additional-results}, we find GoFAR is also superior on the final distance metric. In other words, GoFAR reaches the goals fastest and most precisely.

\para{Ablations.} 
Recall that GoFAR achieves optimal goal-weighting, and does not require the HER heuristic that has been key to prior GCRL approaches. 
We now experimentally confirm that the baselines are not performant without HER. Conversely, we investigate whether HER can help GoFAR. Additionally, we evaluate GoFAR using the sparse binary reward, which implements the looser lower bound in~\eqref{eq:f-divergence-bound-binary}.
\setlength\intextsep{0pt}
\begin{wrapfigure}{r}{0.60\linewidth}
\centering
\includegraphics[width=\linewidth]{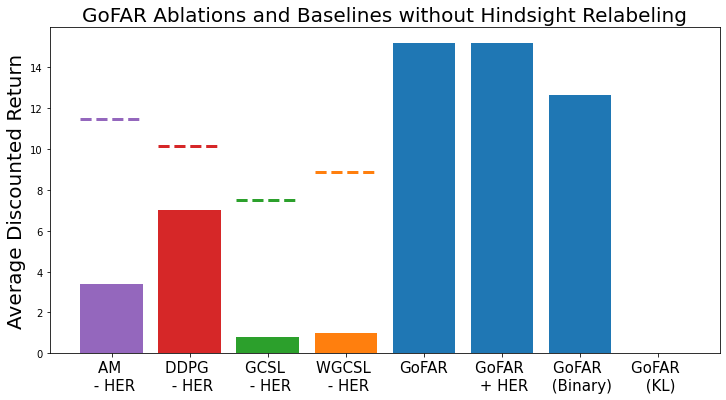}
\vspace{-0.5cm}
\caption{Offline GCRL ablation studies. While GoFAR is robust to hindsight relabeling, removing it is highly detrimental to all baselines.}
\label{figure:ablation-studies}
\end{wrapfigure}
Finally, we replace $\chi^2$-divergence with KL to understand whether the $f$-divergence lower bound in~\eqref{eq:f-divergence-bound} is necessary. The results are shown in Figure~\ref{figure:ablation-studies} (task-specific breakdown is included in Appendix~\ref{appendix:additional-results}). Compared to their original performances (colored dash lines), baselines without HER suffer severely, especially GCSL and WGCSL: both these methods in fact require $100\%$ relabeling and rely solely on HER for positive learning signals. In sharp contrast, GoFAR is also a regression-based method, yet it performs very well without HER and its performance is unaffected by adding HER. 
These findings are consistent with the theoretical results of Section~\ref{section:optimal-goal-weighting}. 
GoFAR (Binary) performs slightly worse than GoFAR, which is to be expected due to the looser bound; however, it is still better than all baselines, which use the same binary reward and are aided by HER. Again, this ablation highlights the merit of our optimization approach. Finally, GoFAR (KL) suffers from numerical instability and fails to learn, showing that our general $f$-divergence lower bound~\eqref{eq:f-divergence-bound} is not only of theoretical value but also practically significant.

\subsection{Robustness in Stochastic Offline GCRL Settings}

For this experiment, we create noisy variants of the FetchReach environment by adding varying levels of white Gaussian noise to policy action outputs. FetchReach is well-suited for this experiment because all baselines attain satisfactory performance and even outperform GoFAR in Table~\ref{table:offline-gcrl-full-discounted-return}. Furthermore, to isolate the effect of noisy actions, we use GoFAR with binary reward so that all methods now use the same reward input and perform comparatively without added noise. We consider zero-mean Gaussian noise of $\sigma$ value from $0.5, 1,$ and $1.5$. For each noise level, we re-collect offline data by executing random actions in the respective noisy environment and train on these noisy offline random data.

\begin{wrapfigure}{r}{0.50\linewidth}
\includegraphics[width=\linewidth]{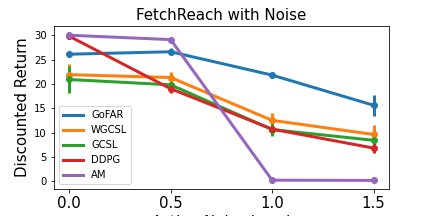}
\vspace{-0.5cm}
\caption{Stochastic environment evaluation. GoFAR is more robust to stochastic environments due to its lack of hindsight goal relabeling.}
\vspace{0.2cm}
\label{figure:stochastic-environment}
\end{wrapfigure}

The average discounted returns over different noise levels are illustrated in Figure~\ref{figure:stochastic-environment}. As shown, at noise level $0.5$, while GoFAR's performance is not effected, all baselines already exhibit degraded performance. In particular, DDPG sharply degrades, underperforming GoFAR despite better original performance. At $1.0$, the gap continues to widen, and AM notably collapses, highlighting its sensitivity to noise. This is expected as AM's relabeling mechanism also samples future goals from other trajectories and labels the selected goals with their current Q-value estimates; this procedure becomes highly noisy when the dataset already exhibits high stochasticity, thus contributing to the instability observed in this experiment. It is only at noise level $1.5$ that GoFAR's performance degradation becomes comparable to those of other methods. Thus, GoFAR's relabeling-free optimization indeed confers greater robustness to environmental stochasticity; in contrast, all baselines suffer from ``false positive'' relabeled goals due to such disturbances.

\subsection{Real-World Robotic Dexterous Manipulation}

Now, we evaluate on a real D'Claw~\cite{ahn2020robel} dexterous manipulation robot (Figure~\ref{figure:fetch-panda-illustration} (left)). The task is to have the tri-finger robot rotate the valve to a specified goal angle from its initial angle; see Appendix~\ref{appendix:task-descriptions} for detail. The offline dataset contains 400K transitions, collected by executing purely random actions in the environment, which provides sufficient coverage of the goal space.
Compared to the simulated D'Claw, an additional challenge is the inherent stochasticity on a real robot system due to imperfect actuation and hardware wear and tear. Given our conclusion in the stochastic environment experiment above, we should expect GoFAR to perform better. 

\setlength\intextsep{0pt}
\begin{wrapfigure}{r}{0.50\linewidth}
\includegraphics[width=\linewidth]{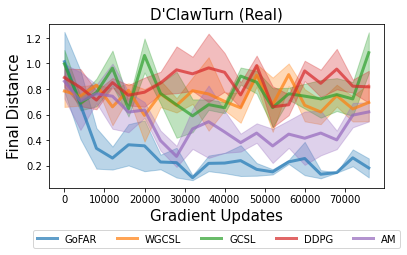}
\vspace{-0.5cm}
\caption{Real-robot results.}
\label{figure:real-robot-results}
\end{wrapfigure}

The training curve of final distances averaged over 3 seeds is shown in Figure~\ref{figure:real-robot-results}. We see that GoFAR exhibits smooth progress over training and significantly outperforms all baselines. At the end of training, GoFAR is able to rotate the valve on average within \textbf{10} degrees of the specified goal angle. In contrast, most baselines do not make any progress on this difficult task and learns policies that do not manage to turn the valve at all. AM is the only baseline that learns a non-trivial policy; however, its training fluctuates significantly, and the learned policy often executes extreme actions during evaluation that fail to reach the desired goal. We provide qualitative analysis in Appendix~\ref{appendix:additional-results-dclaw}, and videos are included in the supplement.

\subsection{Zero-Shot Transfer Across Robots}

\begin{wraptable}{r}{0.5\textwidth}
\caption{Zero-shot transfer results.}
\vspace{-0.2cm}
\label{table:zero-shot-transfer}
\resizebox{0.5\textwidth}{!}{
\begin{tabular}{l|r}
\toprule
\textbf{Method} & Success Rate\\ 
\midrule 
GoFAR Planner (Oracle)& 84\% \\ 
\midrule 
GoFAR Hierarchical Controller & 37\% \\ 
Low-Level Controller & 19\% \\ 
\bottomrule
\end{tabular}}
\end{wraptable}

For this experiment, we consider zero-shot transfer from the FetchPush task (source domain) to an equivalent planar object pushing task using the 7-DOF Franka Panda robot~\cite{gallouedec2021multi}; the environments are illustrated in Figure~\ref{figure:fetch-panda-illustration}. In the target domain, we assume a pre-programmed or learned low-level goal-reaching controller $\pi_{\mathrm{low}}(a \mid s,g)$ that can push objects over short distances but fails for longer tasks. 
For these distant goals, we train the goal-conditioned GoFAR value function $V^*(\phi(s);g)$ and planner $\pi(\phi(s')\mid \phi(s), g)$ according to~\eqref{eq:V-problem-transfer} and~\eqref{eq:pi-gofar-transfer}, to set goals for $\pi_{\mathrm{low}}$. We use the same offline FetchPush data as in Section~\ref{section:offline-gcrl}. 
We compare two approaches: (1) naively using the \textbf{Low-Level Controller}, and (2) using the GoFAR planner to guide the low-level controller (\textbf{GoFAR Hierarchical Controller}). See Appendix~\ref{appendix:zero-shot-transfer} for additional details.  We report the success rate over 100 random distant test goals in Table~\ref{table:zero-shot-transfer}. We see that naively commanding the low-level controller with these distant goals indeed results in very low success rate. When the controller is augmented with GoFAR planner, the success rate nearly doubles. Note that our planner itself is much superior: if all subgoals are perfectly reached, \textbf{GoFAR Planner (Oracle)} achieves 84\% success.  %
See Appendix~\ref{appendix:zero-shot-transfer} for additional experimental details. 
These experiments validate GoFAR's ability to zero-shot transferring subgoal plans to enhance the capabability of low-level controllers. Note that we have no access to any data from the target domain, and this planning capability naturally emerges from our training objectives and does not require any change in the algorithm. 
We show qualitative results in Appendix~\ref{appendix:additional-results-zero-shot} and videos in the supplement.

\section{Conclusion}
We have presented GoFAR, a novel regression-based offline GCRL algorithm derived from a state-occupancy matching perspective. GoFAR is relabeling-free and enjoys uninterleaved training, making it both theoretically and practically advantageous compared to prior state-of-art. Furthermore, GoFAR supports training a goal-conditioned planner with promising zero-shot transfer capability. Through extensive experiments, we have validated the practical utility of GoFAR in various challenging problem settings, showing significant improvement over prior state-of-art. We believe that GoFAR's strong theoretical foundation and empirical performance make it an ideal candidate for scalable skill learning in the real world. 

\section*{Acknowlegement}
We thank Edward Hu and Oleh Rybkin for their feedback on an earlier draft of the paper, Kun Huang for assistance on the real-robot experiment setup, and Selina Li for assistance on graphic design. This work is funded in part by an Amazon Research Award,
gift funding from NEC Laboratories America, NSF Award
CCF-1910769, NSF Award CCF-1917852 and ARO Award
W911NF-20-1-0080. The U.S. Government is authorized to
reproduce and distribute reprints for Government purposes
notwithstanding any copyright notation herein.

\newpage 
\bibliographystyle{plain}
\bibliography{references}

\begin{thebibliography}{10}

\bibitem{agarwal2019reinforcement}
Alekh Agarwal, Nan Jiang, and Sham~M Kakade.
\newblock Reinforcement learning: Theory and algorithms.
\newblock 2019.

\bibitem{ahn2020robel}
Michael Ahn, Henry Zhu, Kristian Hartikainen, Hugo Ponte, Abhishek Gupta,
  Sergey Levine, and Vikash Kumar.
\newblock Robel: Robotics benchmarks for learning with low-cost robots.
\newblock In {\em Conference on robot learning}, pages 1300--1313. PMLR, 2020.

\bibitem{andrychowicz2017hindsight}
Marcin Andrychowicz, Filip Wolski, Alex Ray, Jonas Schneider, Rachel Fong,
  Peter Welinder, Bob McGrew, Josh Tobin, OpenAI Pieter~Abbeel, and Wojciech
  Zaremba.
\newblock Hindsight experience replay.
\newblock {\em Advances in neural information processing systems}, 30, 2017.

\bibitem{boyd2004convex}
Stephen Boyd, Stephen~P Boyd, and Lieven Vandenberghe.
\newblock {\em Convex optimization}.
\newblock Cambridge university press, 2004.

\bibitem{chane2021goal}
Elliot Chane-Sane, Cordelia Schmid, and Ivan Laptev.
\newblock Goal-conditioned reinforcement learning with imagined subgoals.
\newblock In {\em International Conference on Machine Learning}, pages
  1430--1440. PMLR, 2021.

\bibitem{chebotar2021actionable}
Yevgen Chebotar, Karol Hausman, Yao Lu, Ted Xiao, Dmitry Kalashnikov, Jake
  Varley, Alex Irpan, Benjamin Eysenbach, Ryan Julian, Chelsea Finn, et~al.
\newblock Actionable models: Unsupervised offline reinforcement learning of
  robotic skills.
\newblock {\em arXiv preprint arXiv:2104.07749}, 2021.

\bibitem{cortes2010learning}
Corinna Cortes, Yishay Mansour, and Mehryar Mohri.
\newblock Learning bounds for importance weighting.
\newblock {\em Advances in neural information processing systems}, 23, 2010.

\bibitem{dai2016learning}
Bo~Dai, Niao He, Yunpeng Pan, Byron Boots, and Le~Song.
\newblock Learning from conditional distributions via dual embeddings, 2016.

\bibitem{dasari2019robonet}
Sudeep Dasari, Frederik Ebert, Stephen Tian, Suraj Nair, Bernadette Bucher,
  Karl Schmeckpeper, Siddharth Singh, Sergey Levine, and Chelsea Finn.
\newblock Robonet: Large-scale multi-robot learning.
\newblock {\em arXiv preprint arXiv:1910.11215}, 2019.

\bibitem{eysenbach2020rewriting}
Ben Eysenbach, Xinyang Geng, Sergey Levine, and Russ~R Salakhutdinov.
\newblock Rewriting history with inverse rl: Hindsight inference for policy
  improvement.
\newblock {\em Advances in neural information processing systems},
  33:14783--14795, 2020.

\bibitem{eysenbach2020c}
Benjamin Eysenbach, Ruslan Salakhutdinov, and Sergey Levine.
\newblock C-learning: Learning to achieve goals via recursive classification.
\newblock {\em arXiv preprint arXiv:2011.08909}, 2020.

\bibitem{gallouedec2021multi}
Quentin Gallou{\'e}dec, Nicolas Cazin, Emmanuel Dellandr{\'e}a, and Liming
  Chen.
\newblock Multi-goal reinforcement learning environments for simulated franka
  emika panda robot.
\newblock {\em arXiv preprint arXiv:2106.13687}, 2021.

\bibitem{ghasemipour2019divergence}
Seyed Kamyar~Seyed Ghasemipour, Richard Zemel, and Shixiang Gu.
\newblock A divergence minimization perspective on imitation learning methods,
  2019.

\bibitem{ghosh2021learning}
Dibya Ghosh, Abhishek Gupta, Ashwin Reddy, Justin Fu, Coline~Manon Devin,
  Benjamin Eysenbach, and Sergey Levine.
\newblock Learning to reach goals via iterated supervised learning.
\newblock In {\em International Conference on Learning Representations}, 2021.

\bibitem{goodfellow2014generative}
Ian~J. Goodfellow, Jean Pouget-Abadie, Mehdi Mirza, Bing Xu, David
  Warde-Farley, Sherjil Ozair, Aaron Courville, and Yoshua Bengio.
\newblock Generative adversarial networks, 2014.

\bibitem{gulrajani2017improved}
Ishaan Gulrajani, Faruk Ahmed, Martin Arjovsky, Vincent Dumoulin, and Aaron~C
  Courville.
\newblock Improved training of wasserstein gans.
\newblock {\em Advances in neural information processing systems}, 30, 2017.

\bibitem{ho2016generative}
Jonathan Ho and Stefano Ermon.
\newblock Generative adversarial imitation learning.
\newblock {\em Advances in neural information processing systems}, 29, 2016.

\bibitem{kaelbling1993learning}
Leslie~Pack Kaelbling.
\newblock Learning to achieve goals.
\newblock Citeseer.

\bibitem{kalashnikov2021mt}
Dmitry Kalashnikov, Jacob Varley, Yevgen Chebotar, Benjamin Swanson, Rico
  Jonschkowski, Chelsea Finn, Sergey Levine, and Karol Hausman.
\newblock Mt-opt: Continuous multi-task robotic reinforcement learning at
  scale.
\newblock {\em arXiv preprint arXiv:2104.08212}, 2021.

\bibitem{ke2020imitation}
Liyiming Ke, Sanjiban Choudhury, Matt Barnes, Wen Sun, Gilwoo Lee, and
  Siddhartha Srinivasa.
\newblock Imitation learning as $f$-divergence minimization, 2020.

\bibitem{kim2022demodice}
Geon-Hyeong Kim, Seokin Seo, Jongmin Lee, Wonseok Jeon, HyeongJoo Hwang,
  Hongseok Yang, and Kee-Eung Kim.
\newblock Demo{DICE}: Offline imitation learning with supplementary imperfect
  demonstrations.
\newblock In {\em International Conference on Learning Representations}, 2022.

\bibitem{kingma2014adam}
Diederik~P Kingma and Jimmy Ba.
\newblock Adam: A method for stochastic optimization.
\newblock {\em arXiv preprint arXiv:1412.6980}, 2014.

\bibitem{kumar2019stabilizing}
Aviral Kumar, Justin Fu, George Tucker, and Sergey Levine.
\newblock Stabilizing off-policy q-learning via bootstrapping error reduction.
\newblock {\em arXiv preprint arXiv:1906.00949}, 2019.

\bibitem{kumar2020conservative}
Aviral Kumar, Aurick Zhou, George Tucker, and Sergey Levine.
\newblock Conservative q-learning for offline reinforcement learning.
\newblock {\em arXiv preprint arXiv:2006.04779}, 2020.

\bibitem{lange2012batch}
Sascha Lange, Thomas Gabel, and Martin Riedmiller.
\newblock Batch reinforcement learning.
\newblock In {\em Reinforcement learning}, pages 45--73. Springer, 2012.

\bibitem{lee2021optidice}
Jongmin Lee, Wonseok Jeon, Byung-Jun Lee, Joelle Pineau, and Kee-Eung Kim.
\newblock Optidice: Offline policy optimization via stationary distribution
  correction estimation.
\newblock {\em arXiv preprint arXiv:2106.10783}, 2021.

\bibitem{lee2019efficient}
Lisa Lee, Benjamin Eysenbach, Emilio Parisotto, Eric Xing, Sergey Levine, and
  Ruslan Salakhutdinov.
\newblock Efficient exploration via state marginal matching.
\newblock {\em arXiv preprint arXiv:1906.05274}, 2019.

\bibitem{levine2020offline}
Sergey Levine, Aviral Kumar, George Tucker, and Justin Fu.
\newblock Offline reinforcement learning: Tutorial, review, and perspectives on
  open problems.
\newblock {\em arXiv preprint arXiv:2005.01643}, 2020.

\bibitem{lillicrap2015continuous}
Timothy~P Lillicrap, Jonathan~J Hunt, Alexander Pritzel, Nicolas Heess, Tom
  Erez, Yuval Tassa, David Silver, and Daan Wierstra.
\newblock Continuous control with deep reinforcement learning.
\newblock {\em arXiv preprint arXiv:1509.02971}, 2015.

\bibitem{lynch2020learning}
Corey Lynch, Mohi Khansari, Ted Xiao, Vikash Kumar, Jonathan Tompson, Sergey
  Levine, and Pierre Sermanet.
\newblock Learning latent plans from play.
\newblock In {\em Conference on robot learning}, pages 1113--1132. PMLR, 2020.

\bibitem{ma2021conservative}
Yecheng Ma, Dinesh Jayaraman, and Osbert Bastani.
\newblock Conservative offline distributional reinforcement learning.
\newblock {\em Advances in Neural Information Processing Systems}, 34, 2021.

\bibitem{ma2022smodice}
Yecheng~Jason Ma, Andrew Shen, Dinesh Jayaraman, and Osbert Bastani.
\newblock Smodice: Versatile offline imitation learning via state occupancy
  matching.
\newblock {\em arXiv preprint arXiv:2202.02433}, 2022.

\bibitem{nachum2019dualdice}
Ofir Nachum, Yinlam Chow, Bo~Dai, and Lihong Li.
\newblock Dualdice: Behavior-agnostic estimation of discounted stationary
  distribution corrections.
\newblock {\em arXiv preprint arXiv:1906.04733}, 2019.

\bibitem{nachum2020reinforcement}
Ofir Nachum and Bo~Dai.
\newblock Reinforcement learning via fenchel-rockafellar duality, 2020.

\bibitem{nachum2019algaedice}
Ofir Nachum, Bo~Dai, Ilya Kostrikov, Yinlam Chow, Lihong Li, and Dale
  Schuurmans.
\newblock Algaedice: Policy gradient from arbitrary experience.
\newblock {\em arXiv preprint arXiv:1912.02074}, 2019.

\bibitem{nasiriany2019planning}
Soroush Nasiriany, Vitchyr Pong, Steven Lin, and Sergey Levine.
\newblock Planning with goal-conditioned policies.
\newblock {\em Advances in Neural Information Processing Systems}, 32, 2019.

\bibitem{peng2019advantage}
Xue~Bin Peng, Aviral Kumar, Grace Zhang, and Sergey Levine.
\newblock Advantage-weighted regression: Simple and scalable off-policy
  reinforcement learning.
\newblock {\em arXiv preprint arXiv:1910.00177}, 2019.

\bibitem{peters2010relative}
Jan Peters, Katharina Mulling, and Yasemin Altun.
\newblock Relative entropy policy search.
\newblock In {\em Twenty-Fourth AAAI Conference on Artificial Intelligence},
  2010.

\bibitem{plappert2018multi}
Matthias Plappert, Marcin Andrychowicz, Alex Ray, Bob McGrew, Bowen Baker,
  Glenn Powell, Jonas Schneider, Josh Tobin, Maciek Chociej, Peter Welinder,
  et~al.
\newblock Multi-goal reinforcement learning: Challenging robotics environments
  and request for research.
\newblock {\em arXiv preprint arXiv:1802.09464}, 2018.

\bibitem{puterman2014markov}
Martin~L Puterman.
\newblock {\em Markov decision processes: discrete stochastic dynamic
  programming}.
\newblock John Wiley \& Sons, 2014.

\bibitem{rockafellar-1970a}
R.~Tyrrell Rockafellar.
\newblock {\em Convex analysis}.
\newblock Princeton Mathematical Series. Princeton University Press, Princeton,
  N. J., 1970.

\bibitem{ross2011reduction}
St{\'e}phane Ross, Geoffrey Gordon, and Drew Bagnell.
\newblock A reduction of imitation learning and structured prediction to
  no-regret online learning.
\newblock In {\em Proceedings of the fourteenth international conference on
  artificial intelligence and statistics}, pages 627--635. JMLR Workshop and
  Conference Proceedings, 2011.

\bibitem{schaul2015universal}
Tom Schaul, Daniel Horgan, Karol Gregor, and David Silver.
\newblock Universal value function approximators.
\newblock In Francis Bach and David Blei, editors, {\em Proceedings of the 32nd
  International Conference on Machine Learning}, volume~37 of {\em Proceedings
  of Machine Learning Research}, pages 1312--1320, Lille, France, 07--09 Jul
  2015. PMLR.

\bibitem{schroecker2021universal}
Yannick Schroecker and Charles~Lee Isbell.
\newblock Universal value density estimation for imitation learning and
  goal-conditioned reinforcement learning, 2021.

\bibitem{torabi2018generative}
Faraz Torabi, Garrett Warnell, and Peter Stone.
\newblock Generative adversarial imitation from observation.
\newblock {\em arXiv preprint arXiv:1807.06158}, 2018.

\bibitem{xu2020error}
Tian Xu, Ziniu Li, and Yang Yu.
\newblock Error bounds of imitating policies and environments.
\newblock {\em Advances in Neural Information Processing Systems},
  33:15737--15749, 2020.

\bibitem{yang2022rethinking}
Rui Yang, Yiming Lu, Wenzhe Li, Hao Sun, Meng Fang, Yali Du, Xiu Li, Lei Han,
  and Chongjie Zhang.
\newblock Rethinking goal-conditioned supervised learning and its connection to
  offline {RL}.
\newblock In {\em International Conference on Learning Representations}, 2022.

\bibitem{zhang2021towards}
Siyuan Zhang and Nan Jiang.
\newblock Towards hyperparameter-free policy selection for offline
  reinforcement learning.
\newblock In A.~Beygelzimer, Y.~Dauphin, P.~Liang, and J.~Wortman Vaughan,
  editors, {\em Advances in Neural Information Processing Systems}, 2021.

\bibitem{zhu2020off}
Zhuangdi Zhu, Kaixiang Lin, Bo~Dai, and Jiayu Zhou.
\newblock Off-policy imitation learning from observations.
\newblock {\em Advances in Neural Information Processing Systems}, 33, 2020.

\bibitem{ziebart2008maximum}
Brian~D Ziebart, Andrew~L Maas, J~Andrew Bagnell, Anind~K Dey, et~al.
\newblock Maximum entropy inverse reinforcement learning.
\newblock In {\em Aaai}, volume~8, pages 1433--1438. Chicago, IL, USA, 2008.

\end{thebibliography}

\newpage 
\section*{Checklist}

The checklist follows the references.  Please
read the checklist guidelines carefully for information on how to answer these
questions.  For each question, change the default \answerTODO{} to \answerYes{},
\answerNo{}, or \answerNA{}.  You are strongly encouraged to include a {\bf
justification to your answer}, either by referencing the appropriate section of
your paper or providing a brief inline description.  For example:
\begin{itemize}
  \item Did you include the license to the code and datasets? \answerYes{See Section~\ref{gen_inst}.}
  \item Did you include the license to the code and datasets? \answerNo{The code and the data are proprietary.}
  \item Did you include the license to the code and datasets? \answerNA{}
\end{itemize}
Please do not modify the questions and only use the provided macros for your
answers.  Note that the Checklist section does not count towards the page
limit.  In your paper, please delete this instructions block and only keep the
Checklist section heading above along with the questions/answers below.

\begin{enumerate}

\item For all authors...
\begin{enumerate}
  \item Do the main claims made in the abstract and introduction accurately reflect the paper's contributions and scope?
    \answerYes{}
  \item Did you describe the limitations of your work?
    \answerYes{}
  \item Did you discuss any potential negative societal impacts of your work?
    \answerNA{}
  \item Have you read the ethics review guidelines and ensured that your paper conforms to them?
    \answerYes{}
\end{enumerate}

\item If you are including theoretical results...
\begin{enumerate}
  \item Did you state the full set of assumptions of all theoretical results?
    \answerYes{}
	\item Did you include complete proofs of all theoretical results?
    \answerYes{See Appendix~\ref{appendix:proof}.}
\end{enumerate}

\item If you ran experiments...
\begin{enumerate}
  \item Did you include the code, data, and instructions needed to reproduce the main experimental results (either in the supplemental material or as a URL)?
    \answerYes{We include the code and data for our simulation experiments in the supplementary material.}
  \item Did you specify all the training details (e.g., data splits, hyperparameters, how they were chosen)?
    \answerYes{See Appendix~\ref{appendix:experimental-details}.}
	\item Did you report error bars (e.g., with respect to the random seed after running experiments multiple times)?
    \answerYes{See Appendix~\ref{appendix:additional-results}.}
	\item Did you include the total amount of compute and the type of resources used (e.g., type of GPUs, internal cluster, or cloud provider)?
    \answerYes{See Appendix~\ref{appendix:experimental-details}.}
\end{enumerate}

\item If you are using existing assets (e.g., code, data, models) or curating/releasing new assets...
\begin{enumerate}
  \item If your work uses existing assets, did you cite the creators?
    \answerYes{}
  \item Did you mention the license of the assets?
    \answerNA{}
  \item Did you include any new assets either in the supplemental material or as a URL?
    \answerYes{We include videos in our supplementary material.}
  \item Did you discuss whether and how consent was obtained from people whose data you're using/curating?
    \answerNA{}
  \item Did you discuss whether the data you are using/curating contains personally identifiable information or offensive content?
   \answerNA{}
\end{enumerate}

\item If you used crowdsourcing or conducted research with human subjects...
\begin{enumerate}
  \item Did you include the full text of instructions given to participants and screenshots, if applicable?
    \answerNA{}
  \item Did you describe any potential participant risks, with links to Institutional Review Board (IRB) approvals, if applicable?
    \answerNA{}
  \item Did you include the estimated hourly wage paid to participants and the total amount spent on participant compensation?
    \answerNA{}
\end{enumerate}

\end{enumerate}

\newpage 
\appendix
\addcontentsline{toc}{section}{Appendix} %
\part{Appendix} %
These appendices contain material supplementary to submission 7863: "Offline Goal-Conditioned Reinforcement Learning via $f$-Advantage Regression"
\parttoc %

\section{Additional Technical Background}
\label{appendix:additional-technical-background}

We provide some technical definitions that are needed in our proofs in Appendix~\ref{appendix:proof} and discussions of hindsight relabeling in Section 4.1 of the main text as well as Appendix~\ref{appendix:her-optimality}.

\subsection{$f$-Divergence and Fenchel Duality}
These definitions are adapted from~\cite{ma2022smodice, nachum2020reinforcement}.

\begin{definition}[$f$-divergence]
\rm
For any continuous, convex function $f$ and two probability distributions $p,q \in \Delta(\mc{X})$ over a domain $\mc{X}$, the $f$-divergence of $p$ computed at $q$ is defined as 
\begin{equation}
\label{eq:f-divergence}
\D_f(p \| q) = \BE_{x \sim  q}\left[f\left(\frac{p(x)}{q(x)}\right)\right]
\end{equation}
\end{definition}

Some common choices of $f$-divergence includes the KL-divergence and the $\chi^2$-divergence, which corresponds to choosing $f(x) = x\log x$ and $f(x)=\frac{1}{2}(x-1)^2$, respectively. 

\begin{definition}[Fenchel conjugate]
\rm
Given a vector space $\mc{X}$ with inner-product $\langle \cdot, \cdot \rangle$, the \textit{Fenchel conjugate} $f_\star: \mc{X}_\star \rightarrow \mathbb{R}$ of a convex and differentiable function $f: \Omega \rightarrow \mathbb{R}$ is
\begin{equation}
\label{eq:fenchel-conjugate}
f_\star(y) \coloneqq \max_{x\in \mc{X}} \langle x, y \rangle - f(x)
\end{equation}
and any maximizer $x^*$ of $f_\star(y)$ satisfies $x^* = f’_\star(y)$.

For an $f$-divergence, under mild realizability assumptions~\cite{dai2016learning} on $f$, the Fenchel conjugate of $D_f(p\|q)$ at $y: \mc{X} \rightarrow \mathbb{R}$ is 
\begin{align}
\label{eq:fenchel-conjugate-f-divergence}
\D_{\star, f}(y) &= \max_{p \in \Delta(\mc{X})} \BE_{x \sim p}[y(x)] - \D_f(p\|q) \\ 
&=\BE_{x \sim q}[f_\star(y(x))]
\end{align}
and any maximizer $p^*$ of $\D_{\star, f}(y)$ satisfies $p^*(x) = q(x)f'_\star(y(x))$. These optimality conditions can be seen as extensions of the KKT-condition.
\end{definition} 

\subsection{Hindsight Goal-Relabeling}
We provide a mathematical formalism of hindsight goal relabeling~\cite{andrychowicz2017hindsight}.
\begin{definition}
Given a state $s_t$ from a trajectory $\tau = \{s_0,a_0,r_0,...,s_T;g\}$, hindsight goal-relabeling is the goal-relabeling distribution
\begin{equation}
    \label{eq:her-definition}
    p_{\mathrm{HER}}(g \mid s_t, a_t, \tau) = q[\phi(s_t), ..., \phi(s_T)]
\end{equation}
where $q$ is some categorical distribution taking values in $\{\phi(s_t), ..., \phi(s_T)\}$.
\end{definition}
That is, the relabeled goal is selected from some distribution goals that are reached in the future in the same trajectory. The most canonical choice of $q$, known as \textit{hindsight experience replay} (HER), selects $q$ to be the uniform distribution. Once a goal $\tilde{g}$ is chosen, the reward label is also re-computed using the reward function assumed by the algorithm: $r_t \coloneqq r(s_t, \tilde{g})$.

\section{Proofs}
\label{appendix:proof}
In this section, we restate propositions and theorems in the paper and present their proofs. 

\subsection{Proof of Proposition~\ref{proposition:offline-upper-bound}}
\begin{proposition}
Assume for all $g$ in support of $p(g)$, $\forall s, d^O(s;g) > 0$ if $p(s;g) > 0$. Then, for any $f$-divergence that upper bounds the KL-divergence, 
\begin{align}
    \label{eq:f-divergence-bound-appendix}
    -\D_\KL(d^\pi(s;g) \| p(s;g)) &\geq \mathbb{E}_{(s,g) \sim d^\pi(s,g)}\left[\log \frac{p(s;g)}{d^O(s;g)}\right] - \D_f(d^\pi(s,a;g) \| d^O(s,a;g)) \\ 
    \label{eq:f-divergence-bound-binary-appendix}
    &\geq \mathbb{E}_{(s,g) \sim d^\pi(s,g)}\left[\log p(s;g)\right] - \D_f(d^\pi(s,a;g) \| d^O(s,a;g))
\end{align}
\end{proposition}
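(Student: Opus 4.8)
## Proof Proposal

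The plan is to establish the first inequality \eqref{eq:f-divergence-bound-appendix} by decomposing the KL divergence and identifying a hidden $f$-divergence-like term that we can bound. The natural starting point is to write
\[
-\D_\KL(d^\pi(s;g) \| p(s;g)) = \mathbb{E}_{(s,g)\sim d^\pi(s,g)}\left[\log \frac{p(s;g)}{d^\pi(s;g)}\right],
\]
and then insert $d^O(s;g)$ to split the logarithm:
\[
\log \frac{p(s;g)}{d^\pi(s;g)} = \log \frac{p(s;g)}{d^O(s;g)} + \log \frac{d^O(s;g)}{d^\pi(s;g)}.
\]
Taking expectations, the first term gives exactly the reward term $\mathbb{E}_{(s,g)\sim d^\pi(s,g)}[\log \frac{p(s;g)}{d^O(s;g)}]$ appearing on the RHS, while the second term equals $-\D_\KL(d^\pi(s;g)\|d^O(s;g))$, a state-marginal KL. So it remains to show $\D_\KL(d^\pi(s;g)\|d^O(s;g)) \le \D_f(d^\pi(s,a;g)\|d^O(s,a;g))$, i.e., the joint (state-action) $f$-divergence dominates the marginal KL divergence.

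First I would handle the marginal-to-joint step. Since $d^\pi(s;g)=\sum_a d^\pi(s,a;g)$ and likewise for $d^O$, the data-processing inequality for $f$-divergences gives $\D_f(d^\pi(s;g)\|d^O(s;g)) \le \D_f(d^\pi(s,a;g)\|d^O(s,a;g))$ (marginalizing over $a$ is a stochastic map). Then, since by hypothesis the chosen $f$-divergence upper bounds the KL divergence, $\D_\KL(d^\pi(s;g)\|d^O(s;g)) \le \D_f(d^\pi(s;g)\|d^O(s;g))$. Chaining these two facts closes the argument for \eqref{eq:f-divergence-bound-appendix}. The positivity assumption $d^O(s;g)>0$ whenever $p(s;g)>0$ ensures the relevant ratios and the original KL on the LHS are well defined; I should remark that on goals $g$ the whole identity is understood in expectation over $p(g)$, consistent with the notation $d^\pi(s,g)=p(g)d^\pi(s;g)$.

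For the second inequality \eqref{eq:f-divergence-bound-binary-appendix}, I would simply note that $d^O(s;g)$ is a probability density in $s$ (for each $g$), so $\log d^O(s;g) \le 0$ pointwise where it is defined, hence $\log \frac{p(s;g)}{d^O(s;g)} = \log p(s;g) - \log d^O(s;g) \ge \log p(s;g)$; taking expectations under $d^\pi(s,g)$ and leaving the $\D_f$ term untouched yields the looser bound. (If one prefers to avoid pointwise sign arguments, the same conclusion follows from $\mathbb{E}_{d^\pi(s;g)}[-\log d^O(s;g)] = \D_\KL(d^\pi\|d^O) + \mathcal{H}(d^\pi(s;g)) \ge 0$.)

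The main obstacle is the marginal-to-joint comparison: one must be careful that the data-processing inequality is applied to the correct direction and that combining it with the ``$f$ upper bounds KL'' hypothesis is legitimate — specifically, the hypothesis is used on the state marginals, not the joint, so the logical order (DPI first to reduce joint $f$ to marginal $f$, then the KL-domination hypothesis on the marginal) matters. Everything else is bookkeeping with logarithms and expectations.
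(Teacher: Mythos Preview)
Your argument is correct and follows the same overall skeleton as the paper: insert $d^O(s;g)$ into the log, split, and then bound the residual $\D_\KL(d^\pi(s;g)\|d^O(s;g))$ by the joint $\D_f$. The only substantive difference is in that last step. The paper proves an explicit chain-rule lemma $\D_\KL(d_1(s;g)\|d_2(s;g)) \le \D_\KL(d_1(s,a;g)\|d_2(s,a;g))$ (via the decomposition $\D_\KL(s,a) = \D_\KL(s) + \D_\KL(\pi_1\|\pi_2\mid s)$) and then invokes the hypothesis $\D_\KL \le \D_f$ on the \emph{joint} $(s,a)$ distributions. You instead invoke the data-processing inequality for $f$-divergences to pass from the joint $\D_f$ to the marginal $\D_f$, and then use $\D_\KL \le \D_f$ on the \emph{marginals}. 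Both orderings are valid; yours is arguably cleaner since it avoids the auxiliary lemma and does not rely on $d^\pi$ and $d^O$ sharing the same transition kernel.

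One caveat on the second inequality: the claim ``$\log d^O(s;g)\le 0$ pointwise because $d^O$ is a probability density'' is false for continuous densities (which can exceed $1$). Your alternative justification $\mathbb{E}_{d^\pi}[-\log d^O] = \D_\KL(d^\pi\|d^O) + \mathcal{H}(d^\pi) \ge 0$ is exactly what the paper asserts (without proof), but note that it too can fail in the continuous case since differential entropy may be negative. This is a shared gap with the paper rather than a flaw unique to your proposal; in the discrete/tabular setting both arguments are fine.
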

\begin{proof}
We first present and prove some technical lemma needed to prove this result. The following lemmas and proofs are adapted from~\cite{ma2022smodice}; in particular, we extend these known results to the goal-conditioned setting.
\begin{lemma}
\label{lemma:state-matching-upper-bound}
For any pair of valid occupancy distributions $d_1$ and $d_2$,
we have
\begin{equation}
\D_\KL(d_1(s;g)\|d_2(s;g) \leq \D_\KL(d_1(s,a;g)\|d_2(s,a;g))
\end{equation}
\end{lemma}
\begin{proof}
This lemma hinges on proving the following lemma first.
\begin{lemma}
\begin{equation}
\D_\KL \left(d_1(s,a,s';g) \| d_2(s,a,s';g)\right) = \D_\KL\left(d_1(s,a;g) \| d_2(s,a;g) \right)
\end{equation}
\end{lemma}
\begin{proof}
\begin{align*}
    &\D_\KL \left(d_1(s,a,s';g) \| d_2(s,a,s';g)\right)\\ 
    =& \int_{S \times A \times S \times G} p(g) d_1(s,a,s';g) \log \frac{d_1(s,a;g) \cdot T(s' \mid s,a)}{d_2(s,a;g) \cdot T(s'\mid s,a)} ds' dadsdg \\ 
    =& \int_{S \times A \times S \times G} p(g) d_1(s,a,s';g) \log \frac{d_1(s,a;g)}{d_2(s,a;g)} ds' dads dg \\ 
    =& \int_{S \times A \times G} p(g) d_1(s,a;g)\log \frac{d_1(s,a;g)}{d_2(s,a;g)} dads dg\\ 
    =&\D_\KL\left(d_1(s,a;g) \| d_2(s,a;g) \right)
\end{align*}
\end{proof}
Using this result, we can prove Lemma \ref{lemma:state-matching-upper-bound}:
\begin{align*}
    &\D_\KL\left(d_1(s,a;g) \| d_2(s,a;g) \right) \\
    =& \D_\KL \left(d_1(s,a,s';g) \| d_2(s,a,s';g)\right) \\
    =& \int_{S \times A \times S \times G} p(g) d_1(s,a,s';g) \log \frac{d_1(s,a;g) \cdot T(s' \mid s,a)}{d_2(s,a;g) \cdot T(s'\mid s,a)} ds' dads dg\\ 
    =& \int_{S \times A \times S \times G} p(g) d_1(s;g) \pi_1(a\mid s,g) T(s'\mid s,a) \log \frac{d_1(s,a;g) \cdot T(s' \mid s,a)}{d_2(s,a;g) \cdot T(s'\mid s,a)} ds' dads dg\\ 
    =& \int p(g)d_1(s;g) \pi_1(a\mid s,g) T(s'\mid s,a) \log \frac{d_1(s;g)}{d_2(s;g)} ds'dadsdg\\ 
    +& \int p(g) d_1(s;g) \pi_1(a\mid s,g) T(s'\mid s,a) \log \frac{\pi_1(a\mid s,g) T(s' \mid s,a)}{\pi_2(a\mid s,g) T(s'\mid s,a)} ds'dadsdg \\
    =& \int p(g)d_1(s;g) \log \frac{d_1(s;g)}{d_2(s;g)} dsdg + \int p(g)d_1(s;g) \pi_1(a\mid s,g) \log \frac{\pi_1(a\mid s,g)}{\pi_2(a\mid s,g)} da ds dg \\
    =& \D_\KL\left(d_1(s;g) \| d_2(s;g)\right) + \D_\KL\left(\pi_1(a \mid s,g) \| \pi_2(a \mid s,g)\right) \\
    \geq& \D_\KL\left(d_1(s;g) \| d_2(s;g) \right) 
\end{align*}
\end{proof} 

Now given these technical lemmas, we have
\begin{align*}
    &D_\KL\left(d^\pi(s;g) \| p(s;g) \right) \\
    =& \int p(g) d^\pi(s;g) \log \frac{d^\pi(s;g)}{p(s;g)}\cdot \frac{d^O(s;g)}{d^O(s;g)} dsdg, \quad \text{we assume that $d^O(s;g) > 0$ whenever $p(s;g) > 0$.} \\ 
    =& \int p(g) d^\pi(s;g) \log \frac{d^O(s;g)}{p(s;g)} dsdg + \int p(g) d^\pi(s;g) \log \frac{d^\pi(s;g)}{d^O(s;g)} dsdg \\
    \leq& \BE_{(s,g) \sim d^\pi(s,g)} \left[\log \frac{d^O(s;g)}{p(s;g)} \right] + \D_\KL\left(d^\pi(s,a;g) \| d^O(s,a;g) \right)
\end{align*}
where the last step follows from Lemma \ref{lemma:state-matching-upper-bound}. Then, for any $\D_f \geq \D_\KL$, we have that
\begin{equation}
-D_\KL\left(d^\pi(s;g) \| p(s;g) \right)  \geq \BE_{(s,g) \sim d^\pi(s,g)} \left[\log \frac{p(s;g)}{d^O(s;g)} \right] - \D_f\left(d^\pi(s,a;g) \| d^O(s,a;g) \right)
\end{equation}
Then, since $\BE_{(s,g) \sim d^\pi(s,g)} \left[\log \frac{1}{d^O(s;g)} \right] \geq 0$, we also obtain the following looser bound:
\begin{equation}
-D_\KL\left(d^\pi(s;g) \| p(s;g) \right)  \geq \BE_{(s,g) \sim d^\pi(s,g)} \left[\log p(s;g) \right] - \D_f\left(d^\pi(s,a;g) \| d^O(s,a;g) \right)
\end{equation}
\end{proof}

\subsection{Proof of Proposition~\ref{proposition:dual}}
\begin{proposition}
The dual problem to~\eqref{eq:f-divergence-primal-problem} is 
\begin{equation}
\label{eq:V-problem-appendix}
     (\mathrm{D}) \quad \min_{V(s,g)\geq0}  (1-\gamma)\BE_{(s,g)\sim \mu_0,p(g)}[V(s;g)] +\BE_{(s,a,g) \sim d^O}\left[f_\star \left(r(s,g) + \gamma \mathcal{T}V(s,a;g) - V(s;g)\right)\right],
\end{equation}
where $f_\star$ denotes the convex conjugate function of $f$, $V(s;g)$ is the Lagrangian vector, and $\mathcal{T}V(s,a;g) = \mathbb{E}_{s'\sim T(\cdot \mid s,a)}[V(s';g)]$. Given the optimal $V^*$, the primal optimal $d^*$ satisfies:
\begin{equation}
\label{eq:d-star-appendix}
    d^*(s,a;g) = d^O(s,a;g) f'_\star\left(r(s,g) + \gamma \mathcal{T}V^*(s,a,g) - V^*(s,g)\right), \forall s\in S, a\in A, g\in G
\end{equation}
\end{proposition}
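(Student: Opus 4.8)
The plan is a standard convex-duality argument in the spirit of DICE-type derivations: Lagrangian relaxation of the Bellman flow constraints, followed by identifying the resulting inner maximization over $d$ with the Fenchel conjugate of the $f$-divergence. First I would attach a Lagrange multiplier $V(s;g)$ to each Bellman flow constraint in (P) and form the Lagrangian (weighting the multiplier term by $p(g)$ so that it is commensurate with the goal-averaged objective):
\begin{align*}
L(d,V) &= \mathbb{E}_{(s,g)\sim d(s,g)}[R(s;g)] - \D_f(d(s,a;g)\|d^O(s,a;g)) \\
&\quad + \mathbb{E}_{g\sim p(g)}\sum_s V(s;g)\Big[(1-\gamma)\mu_0(s) + \gamma\sum_{\tilde s,\tilde a}T(s\mid\tilde s,\tilde a)\,d(\tilde s,\tilde a;g) - \sum_a d(s,a;g)\Big].
\end{align*}
Regrouping the $V$-terms, the $\mu_0$ contribution becomes $(1-\gamma)\mathbb{E}_{(s,g)\sim\mu_0,p(g)}[V(s;g)]$; swapping the order of summation in the transition term and relabeling $(\tilde s,\tilde a)\mapsto(s,a)$ turns it into $\gamma\,\mathbb{E}_{(s,a,g)\sim d}[\mathcal{T}V(s,a;g)]$; and the last term is $-\mathbb{E}_{(s,a,g)\sim d}[V(s;g)]$. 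Hence $L(d,V) = (1-\gamma)\mathbb{E}_{\mu_0,p(g)}[V] + \mathbb{E}_{(s,a,g)\sim d}\big[R(s;g) + \gamma\mathcal{T}V(s,a;g) - V(s;g)\big] - \D_f(d\|d^O)$.

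Next, fixing $V$, I would carry out the inner maximization $\max_{d\ge 0}\mathbb{E}_{(s,a,g)\sim d}[y(s,a;g)] - \D_f(d\|d^O)$ with $y := R + \gamma\mathcal{T}V - V$. This is precisely the Fenchel-conjugate identity \eqref{eq:fenchel-conjugate-f-divergence} applied to $\D_f$ with reference measure $p(g)\,d^O(s,a;g)$: its value equals $\mathbb{E}_{(s,a,g)\sim d^O}[f_\star(y(s,a;g))]$ and the maximizer satisfies $d^*(s,a;g) = d^O(s,a;g)\,f'_\star(y(s,a;g))$. Substituting back collapses $\max_d L(d,V)$ to $(1-\gamma)\mathbb{E}_{\mu_0,p(g)}[V] + \mathbb{E}_{d^O}[f_\star(R + \gamma\mathcal{T}V - V)]$, which is exactly (D). To conclude, I would invoke strong duality: (P) is a concave maximization (linear objective plus the concave $-\D_f$) over a polyhedral feasible set that contains a relatively interior point (e.g.\ the occupancy measure of a suitably randomized policy), so Slater's condition holds, the primal optimum is attained, and the KKT stationarity condition for the inner problem evaluated at the dual optimum $V^*$ is exactly \eqref{eq:d-star}.

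The main obstacle is making the inner-maximization step rigorous: one must take the Fenchel conjugate of $\D_f$ regarded as a convex functional on the cone of non-negative measures (rather than on the probability simplex), so that no spurious normalization multiplier appears, and one must verify the mild realizability conditions (cf.\ the references behind \eqref{eq:fenchel-conjugate-f-divergence}) that justify exchanging $\min_V$ and $\max_d$ and differentiating under the expectation. The sign constraint $V\ge 0$ in (D) is a minor technical point rather than a consequence of an equality multiplier: it can be obtained by relaxing each Bellman flow equality to the inequality $\sum_a d(s,a;g)\ge(1-\gamma)\mu_0(s)+\gamma\sum_{\tilde s,\tilde a}T(s\mid\tilde s,\tilde a)d(\tilde s,\tilde a;g)$, which is tight at the optimum and yields a non-negative multiplier, or simply by observing that under $R\ge 0$ the dual optimum is already non-negative; in either case the form of \eqref{eq:d-star} is unchanged. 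The remaining work — the summation swap, the $p(g)$ bookkeeping, and checking feasibility of the interior point — is routine.
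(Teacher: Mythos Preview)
Your proposal is correct and follows essentially the same route as the paper: form the Lagrangian with multipliers $p(g)V(s;g)$, regroup the Bellman-flow terms into $(1-\gamma)\mathbb{E}_{\mu_0,p(g)}[V]$ and $\mathbb{E}_d[R+\gamma\mathcal{T}V-V]$, and then recognize the inner maximization over $d$ as the Fenchel conjugate of $\D_f$, yielding both (D) and the $d^*=d^O f'_\star(\cdot)$ relation. Your additional discussion of strong duality via Slater's condition and of the $V\ge 0$ sign constraint goes beyond what the paper's proof actually provides, so if anything your treatment is more careful.
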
 
\begin{proof}
We begin by writing the Lagrangian dual of the primal problem: 
\begin{equation}
\label{eq:dual-lagrangian}
\begin{split}
&\min_{V(s;g)\geq 0} \max_{d(s,a;g)\geq 0} \BE_{(s,g) \sim d(s,g)}\left[\log\left(r(s;g)\right)\right] - \D_f(d(s,a;g) \| d^O(s,a;g))\\
&+ \sum_{s,g} p(g)V(s;g)\left((1-\gamma)\mu_0(s) + \gamma \sum_{\tilde{s}, \tilde{a}}T(s\mid \tilde{s}, \tilde{a}) d(\tilde{s}, \tilde{a};g) - \sum_a d(s,a;g) \right)
\end{split}
\end{equation}
where $p(g)V(s;g)$ is the Lagrangian vector. 
Then, we note that 
\begin{equation}
    \sum_{s,g} V(s;g)\sum_{\tilde{s}, \tilde{a}}T(s\mid \tilde{s}, \tilde{a}) d(\tilde{s}, \tilde{a};g) = \sum_{\tilde{s},\tilde{a},g} d(\tilde{s},\tilde{a};g) \sum_{s}T(s \mid \tilde{s}, \tilde{a})V(s;g) = \sum_{s,a,g}d(s,a;g) \mathcal{T}V(s,a;g) 
\end{equation}
Using this, we can rewrite~\eqref{eq:dual-lagrangian} as 
\begin{equation}
\begin{split}
\min_{V(s;g)\geq0} \max_{d(s,a;g)\geq0 } (1-\gamma)\BE_{(s,g)\sim (\mu_0,p(g))}[V(s;g)] &+ \BE_{(s,a,g) \sim d}\left[\left(r(s;g) + \gamma \mathcal{T}V(s,a;g) - V(s;g)\right)\right]\\ 
&- \D_f(d(s,a;g) \| d^O(s,a;g))
\end{split}
\end{equation}
And finally,
\begin{equation}
    \label{eq:f-divergence-dual-problem}
\begin{split}
    \min_{V(s,g)\geq0}  (1-\gamma)\BE_{(s,g)\sim (\mu_0,p(g))}[V(s;g)] &+ \max_{d(s,a;g)\geq0 } \BE_{(s,a,g) \sim d}\left[\left(r(s,g) + \gamma \mathcal{T}V(s,a;g) - V(s;g)\right)\right] \\ 
    &- \D_f(d(s,a;g) \| d^O(s,a;g))
\end{split}
\end{equation}
Now, we make the key observation that the inner maximization problem in \eqref{eq:f-divergence-dual-problem} is in fact the Fenchel conjugate of $\D_f(d(s,a,g) \| d^O(s,a,g))$ at $r(s,g) + \gamma \mathcal{T}V(s,a,g) - V(s,g)$. Therefore, we can reduce \eqref{eq:f-divergence-dual-problem} to an unconstrained minimization problem over the dual variables
\begin{equation}
     \quad \min_{V(s,g)\geq0}  (1-\gamma)\BE_{(s,g)\sim \mu_0,p(g)}[V(s;g)] +\BE_{(s,a,g) \sim d^O}\left[f_\star \left(r(s,g) + \gamma \mathcal{T}V(s,a;g) - V(s;g)\right)\right],
\end{equation}
and consequently, we can relate the dual-optimal $V^*$ to the primal-optimal $d^*$ using Fenchel duality (see Appendix~\ref{appendix:additional-technical-background}:
\begin{equation}
    d^*(s,a;g) = d^O(s,a;g) f'_\star\left(r(s,g) + \gamma \mathcal{T}V^*(s,a,g) - V^*(s,g)\right), \forall s\in S, a\in A, g\in G,
\end{equation}
as desired. 
\end{proof}

\subsection{Proof of Theorem~\ref{theorem:gofar-bound}}
\begin{theorem}
Assume $\sup_{s,a,g} \frac{d^*(s,a;g)}{d^O(s,a;g)} \leq M$ and $\sup \lvert r(s,g) \rvert \leq R_{\max}$. Consider a policy class $\Pi:\{S\rightarrow \Delta(A)\}$ such that $\pi^* \in \Pi$. Then, for any $\delta \in (0,1]$, with probability at least $1-\delta$, GoFAR~\eqref{eq:pi-gofar} will return a policy $\hat{\pi}$ such that:
\begin{equation}
V^* - V^{\hat{\pi}} \leq \frac{2R_{\max}M}{(1-\gamma)^2} \sqrt{\frac{\ln(|\Pi|/\delta)}{N}} 
\end{equation}
where $V^\pi \coloneqq \mathbb{E}_{(s,g) \sim (\mu_0, g)}[V(s;g)].$
\end{theorem}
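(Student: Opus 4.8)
The plan is to treat GoFAR's idealized policy step (with exact $V^*$, hence exact $d^*$ from \eqref{eq:d-star}) as a \emph{weighted maximum-likelihood / behavior-cloning} problem against the optimal occupancy $d^*$, and then chain a finite-sample imitation bound with a performance-difference argument. First I would invoke the optimal goal-weighting property of Section~\ref{section:optimal-goal-weighting}: by \eqref{eq:pi-gofar}, the population objective solved by GoFAR is $\min_\pi -\mathbb{E}_{g\sim p(g)}\mathbb{E}_{(s,a)\sim d^*(s,a;g)}[\log\pi(a\mid s,g)]$, whose unconstrained minimizer is $\pi^*$ (the optimal policy of the regularized problem \eqref{eq:f-divergence-primal-problem}, with $d^{\pi^*}=d^*$), and $\pi^*\in\Pi$ by assumption. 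The \emph{empirical} objective that GoFAR actually optimizes, \eqref{eq:f-advantage-regression}, rewrites this expectation over the $N$ offline transitions — i.i.d.\ draws from $d^O$ — reweighted by $w(s,a;g)=d^*(s,a;g)/d^O(s,a;g)$, which is bounded by $M$ by hypothesis. So $\hat\pi$ is the solution of an importance-weighted MLE with bounded weights over a realizable finite class: exactly the setting of the weighted-regression bounds of~\cite{cortes2010learning} and the behavior-cloning analyses of~\cite{ross2011reduction,xu2020error,agarwal2019reinforcement}.

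Next I would establish two quantitative ingredients. (i) A performance-difference bound: for any $\pi$, $V^* - V^{\pi} \le \frac{2R_{\max}}{(1-\gamma)^2}\,\mathbb{E}_{g\sim p(g)}\,\mathbb{E}_{s\sim d^*(s;g)}\big[D_{\mathrm{TV}}(\pi^*(\cdot\mid s,g),\pi(\cdot\mid s,g))\big]$, obtained by the standard telescoping/coupling argument (the $f$-divergence regularizer only shapes $d^*=d^{\pi^*}$, while the performance metric is a plain discounted expectation of $r$, so $R_{\max}=\sup|r|$ enters and the quadratic horizon factor $(1-\gamma)^{-2}$ comes from accumulating per-state error under the induced distribution shift). (ii) A finite-sample guarantee for the reweighted MLE: adapting the density-estimation bound for maximum likelihood over a finite realizable class, but carrying the weight bound $M$ through the exponential-moment step (the importance-weighting correction), to obtain, with probability at least $1-\delta$, $\mathbb{E}_{g\sim p(g)}\,\mathbb{E}_{s\sim d^*(s;g)}\big[D_{\mathrm{TV}}(\pi^*(\cdot\mid s,g),\hat\pi(\cdot\mid s,g))^2\big] \le M^2\,\ln(|\Pi|/\delta)/N$, and then Jensen's inequality converts this to the corresponding bound on $\mathbb{E}[D_{\mathrm{TV}}]$ with a square root.

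Combining (i) applied to $\hat\pi$ with (ii) yields $V^* - V^{\hat\pi} \le \frac{2R_{\max}M}{(1-\gamma)^2}\sqrt{\ln(|\Pi|/\delta)/N}$, as claimed; note the bound involves only $N$, the size of the \emph{offline} dataset, with no requirement of access to $d^*$-data — the entire effect of the off-policy mismatch is absorbed into the factor $M$.

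The step I expect to be the main obstacle is (ii): pushing the importance weights cleanly through the MLE analysis — they are precisely what allows us to control a quantity that lives under $d^*$ via a concentration argument that only ever samples from $d^O$ — while keeping the correct $M$- and $N$-dependence, since this is exactly what separates GoFAR's guarantee from a vanilla behavior-cloning bound. A secondary subtlety is being precise about the meaning of $V^*$ and $V^{\hat\pi}$ for the $f$-regularized objective and checking that the performance-difference lemma still applies; this is routine once one observes that the metric is the ordinary discounted return under $r$ and that the target policy $\pi^*$ has occupancy exactly $d^*$.
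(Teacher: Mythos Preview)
Your proposal is correct and follows essentially the same route as the paper: apply the performance-difference lemma to bound $V^*-V^{\hat\pi}$ by $\frac{2R_{\max}}{(1-\gamma)^2}\,\mathbb{E}_{s\sim d^*,g\sim p(g)}[\|\pi^*(\cdot\mid s,g)-\hat\pi(\cdot\mid s,g)\|_{\mathrm{TV}}]$, then control this TV term via a weighted concentration argument (the paper simply invokes ``Hoeffding's inequality with weighted empirical loss~\cite{cortes2010learning}'' where you spell out the squared-TV MLE bound plus Jensen). If anything, your step~(ii) is more explicit than the paper's one-line citation, but the decomposition and the role of $M$ are identical.
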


\begin{proof}

We begin by deriving an upper bound using the performance difference lemma~\cite{agarwal2019reinforcement}:
\begin{equation}
    V^* - V^{\hat{\pi}} \leq \frac{1}{1-\gamma} \mathbb{E}_{s \sim d^*, g\sim p(g)} \mathbb{E}_{a \sim \pi^*(\cdot \mid s,g)}A^{\hat{\pi}}(s,a;g)
\end{equation}
Then, using standard algebraic manipulations, we have:
\begin{equation}
    \begin{split}
        & \frac{1}{1-\gamma} \mathbb{E}_{s \sim d^*, g\sim p(g)} \mathbb{E}_{a \sim \pi^*(\cdot \mid s,g)}A^{\hat{\pi}}(s,a;g) \\ 
        =& \frac{1}{1-\gamma} \mathbb{E}_{s \sim d^*, g\sim p(g)} \left[\mathbb{E}_{a \sim \pi^*(\cdot \mid s,g)}A^{\hat{\pi}}(s,a;g) - \mathbb{E}_{a \sim \hat{\pi}(\cdot \mid s,g)}A^{\hat{\pi}}(s,a;g)\right] \\ 
        \leq & \frac{R_{\max}}{(1-\gamma)^2} \mathbb{E}_{s \sim d^*, g\sim p(g)} \left[\norm{\pi^*(\cdot \mid s,g) - \hat{\pi}(\cdot \mid s,g)}_1\right] \\ 
        \leq & \frac{2 R_{\max}}{(1-\gamma)^2} \mathbb{E}_{s \sim d^*, g\sim p(g)} \left[\norm{\pi^*(\cdot \mid s,g) - \hat{\pi}(\cdot \mid s,g)}_{\mathrm{TV}}\right] \\ 
    \end{split}
\end{equation}

Then, since $\sup_{s,a,g} \frac{d^*(s,a;g)}{d^O(s,a;g)} \leq M$, we can use Hoeffding's inequality with weighted empirical loss~\cite{cortes2010learning} to obtain that:
\begin{equation}
V^* - V^{\hat{\pi}} \leq \frac{2R_{\max}M}{(1-\gamma)^2} \sqrt{\frac{\ln(|\Pi|/\delta)}{N}} 
\end{equation}
\end{proof}

\section{GoFAR Technical Details}
\label{appendix:gofar-details} 
In this section, we provide additional technical details of GoFAR that are omitted in the main text. These include (1) detail of the GoFAR discriminator training, (2) mathematical expressions of GoFAR specialized to common $f$-Divergences, and (3) a full pseudocode. 
\subsection{Discriminator Training}
Training the discriminator~\ref{eq:discriminator-training} in practice requires choosing $p(s;g)$. For simplicity, we set $p(s;g)$ to be the Dirac distribution centered at $g$: $\mathbb{I}(\phi(s)=g)$; this precludes having to choose hyperparameters for $p(s;g)$. 

Once the discriminator has converged, we can retrieve the reward function $R(s;g) = \log \frac{p(s;g)}{d^O(s;g)} = - \log \left(\frac{1}{c^*(s;g)}-1\right)$, since $c^*(s;g) = \frac{d^O(s;g)}{p(s;g) + d^O(s;g)}$. 

\subsection{GoFAR with common $f$-Divergences}
GoFAR requires choosing a $f$-divergence. Here, we specialize GoFAR to $\chi^2$-divergence as well as KL-divergence as examples. Our practical implementation uses $\chi^2$-divergence, which we found to be significantly more stable than KL-divergence (see Section~\ref{section:offline-gcrl}). 

\begin{example}[GoFAR with $\chi^2$-divergence]
\label{appendix:gofar-example-chi}
$f(x) = \frac{1}{2}(x-1)^2$, and we can show that $f_\star(x) = \frac{1}{2}(x+1)^2$ and $f'_\star(x) = x+1$. Hence, the GoFAR objective amounts to
\begin{equation}
\begin{split}
&\min_{V(s;g)\geq0} (1-\gamma)\BE_{(s,g)\sim (\mu_0,p(g))}[V(s;g)] + \frac{1}{2}\BE_{(s,a,g) \sim d^O}\left[\left(R(s;g) + \gamma \mathcal{T}V(s,a;g) - V(s;g) + 1\right)^2\right]
\end{split}
\end{equation}
and 
\begin{equation}
     d^*(s,a;g) = d^O(s,a;g)\max\left(0, R(s,a;g) +\gamma \mathcal{T}V^*(s,a;g)-V^*(s;g) + 1 \right) 
\end{equation}
\end{example}

\begin{example}[GoFAR with KL-divergence]
We have $f(x) = x\log x$ and that $\D_{\star, f}(y) = \log \BE_{x\sim q}[\mathrm{exp} y(x)]$~\cite{boyd2004convex}. Hence, the KL-divergence GoFAR objective is 
\begin{equation}
\begin{split}
&\min_{V(s;g)\geq0} (1-\gamma)\BE_{(s,g)\sim (\mu_0,p(g))}[V(s;g)]+ \log \BE_{(s,a,g) \sim d^O}\left[ \mathrm{exp}\left(R(s;g) + \gamma \mathcal{T}V(s,a;g) - V(s;g)\right)\right]
\end{split}
\end{equation}
and 
\begin{equation}
     d^*(s,a;g) = d^O(s,a;g)\mathrm{softmax}\left(R(s;g)+\gamma \mathcal{T}V^*(s,a;g)-V^*(s;g)\right)
\end{equation}
\end{example}

Now, we provide the full pseudocode for GoFAR implemented using $\chi^2$-divergence in Algorithm~\ref{alg:gofar-deep}. 

\subsection{Full Pseudocode}
\vspace{0.5cm}
\begin{algorithm}[H]
\caption{GoFAR for Continuous MDPs}
\label{alg:gofar-deep}
\begin{algorithmic}[1]
\small
\STATE \textbf{Require}: Offline dataset $d^O$, choice of $f$-divergence $f$, choice of $p(s;g)$
\STATE Randomly initialize discriminator $c_\psi$, value function $V_\theta$, and policy $\pi_\phi$. 
\STATE \textcolor{purple}{\texttt{// Train Discriminator (Optional)}}
\FOR{\text{number of discriminator iterations}}
\STATE Sample minibatch $\{s_d^i, g^i\}_{i=1}^N\sim d^O$
\STATE Sample $\{s_g^j\}_{j=1}^M\sim p(s;g^i) \quad \forall i \in 1\dots N$ 
\STATE Discriminator objective: $\mc{L}_c(\psi) = \frac{1}{N}\sum_{i=1}^N[\log (1-c_\psi(s_d^j, g^i)) + \frac{1}{M} \sum_{j=1}^M [\log c_{\psi}(s_g^j, g^i)]]$
\STATE Update $c_\psi$ using SGD: $c_\psi \leftarrow c_\psi - \alpha_c \nabla \mc{L}_c(\psi)$
\ENDFOR
\STATE \textcolor{purple}{\texttt{// Train Dual Value Function}}
\FOR{\text{number of value iterations}}
\STATE Sample minibatch of offline data $\{s_t^i, a_t^i, s_{t+1}^i, g_t^i\}_{i=1}^N \sim d^O, \{s_0^i\}_{i=1}^M\sim \mu_0$, $\{g_0^i\}_{i=1}^M \sim d^O$
\STATE If discriminator, obtain reward: $R(s_t^i;g_t^i) = - \log \left(\frac{1}{c_\psi(s_t^i;g_t^i)}-1\right) \quad \forall i = 1\dots N$
\STATE If no discriminator, obtain reward: $\{R(s_t^i;g_t^i)\}_{i=1}^N \sim d^{O}$
\STATE Value objective: $\mc{L}_V(\theta) = \frac{1-\gamma}{M}\sum_{i=1}^M[V_\theta(s_0^i ; g_0^i)]+\frac{1}{N}\sum_{i=1}^N\left[f_{\star}(R_t^i+\gamma V(s_{t+1}^i; g_t^i)-V(s_t^i; g_t^i))\right]$
\STATE Update $V_\theta$ using SGD: $V_\theta \leftarrow V_\theta - \alpha_V
\nabla \mc{L}_V(\theta)$
\ENDFOR

\STATE \textcolor{purple}{\texttt{// Train Policy With $f$-Advantage Regression}}
\FOR{\text{number of policy iterations}}
\STATE Sample minibatch of offline data $\{s_t^i, a_t^i, s_{t+1}^i, g_t^i\}_{i=1}^N \sim d^O$
\STATE If discriminator, obtain reward: $R(s_t^i;g_t^i) = - \log \left(\frac{1}{c_\psi(s_t^i;g_t^i)}-1\right) \quad \forall i = 1\dots N$
\STATE If no discriminator, obtain reward: $\{R(s_t^i;g_t^i)\}_{i=1}^N \sim d^{O}$
\STATE Policy objective: $\mc{L}_\pi(\phi) = \sum_{i=1}^N\left[\left(f_{\star}^{\prime}\left(R_t^i+\gamma V_\theta(s_{t+1}^i; g_t^i)-V_\theta(s_t^i; g_t^i\right)\right) \log \pi(a \mid s, g)\right]$
\STATE Update $\pi_\phi$ using SGD: $\pi_\phi \leftarrow \pi_\phi - \alpha_\pi
\nabla \mc{L}(\phi)$
\ENDFOR
\end{algorithmic}
\end{algorithm}

\section{GoFAR for Tabular MDPs}
\label{appendix:gofar-tabular-mdp} 
In Section 4.1 of the main text, we have stated that in tabular MDPs,  GoFAR's optimal dual value function~\eqref{eq:V-problem} admits closed-form solution when we choose $\chi^2$-divergence. Here, we provide a derivation of this result.

Recall the dual problem~\eqref{appendix:gofar-example-chi}
\begin{equation}
\min_{V(s;g)\geq0} (1-\gamma)\BE_{(s,g)\sim (\mu_0,p(g))}[V(s;g)] + \frac{1}{2}\BE_{(s,a,g) \sim d^O}\left[\left(R(s;g) + \gamma \mathcal{T}V(s,a;g) - V(s;g) + 1\right)^2\right]
\end{equation}

To derive a closed-form solution, we rewrite the problem in vectorized notation; we borrow our notations from~\cite{ma2022smodice}. We first augment the state-space by concatenating the state dimensions and the goal dimensions so that the new state space $\tilde{S}$ has dimension $S+G$. Then, the new transition function, with slight abuse of notation, $T((s',g')\mid (s,g),a) = T(s\mid s,a) \mathbb{I}(g'=g)$; the new initial state distribution is thus $\mu_0(s,g) = \mu_0(s)p(g)$. Therefore, $\tilde{T} \in \mathbb{R}_+^{|S||G||A| \times |S||G|}$ and $\mu_0 \in \mathbb{R}_+^{|S||G|}$. 

We assume that the offline dataset $\mathcal{D}^O$ is collected by a behavior policy $\pi_b$. We construct a surrogate MDP $\hat{\mathcal{M}}$ using maximum likelihood estimation; that is, $\hat{T}((s',g')\mid (s,g),a) = \frac{n(s,a,s')}{n(s,a)} \mathbb{I}(g'=g)$, and we impute $\hat{T}((s',g)\mid (s,g),a) = \frac{1}{S}$ when $n(s,a) = 0$. Then, using $\hat{M}$, we can compute $d^O\in \mathbb{R}_+^{|S||G||A|}$ using linear programming and define reward $r \in \mathbb{R}_+^{|S||G|}$ as $r(s;g) = \log \frac{p(s;g)}{d^O(s;g)}$, where $p(s;g) \in \mathbb{R}_+^{|S||G|}$. Now, define $\mathcal{T} \in \mathbb{R}^{|S||G||A|\times |S||G|}$ such that $(\mathcal{T}V)(s,a;g) = \sum{s'} T((s',g) \mid (s,g), a) V(s';g)$, where $V\in \mathbb{R}_+^{|S||G|}$ is the dual optimization variables. We also define $\mathcal{B} \in \mathbb{R}^{|S||G||A| \times |S||G|}$ such that $(\mathcal{B}V)(s,a;g) = V(s;g)$. Finally, we define $D = \mathrm{diag}(d^O) \in \mathbb{R}^{|S||G||A| \times |S||G||A|}$. Now, we can rewrite the dual problem as follow:
\begin{equation}
\begin{split}
&\min_{V(s;g)\geq0} (1-\gamma)\BE_{(s,g)\sim (\mu_0,p(g))}[V(s;g)] + \frac{1}{2}\BE_{(s,a,g) \sim d^O}\left[\left(R(s;g) + \gamma \mathcal{T}V(s,a;g) - V(s;g) + 1\right)^2\right] \\
   \Rightarrow &\min_{V(s;g)} (1-\gamma) \mu_0^{\top} V +\frac{1}{2}\BE_{(s,a,G) \sim d^O}\left[\left(\underbrace{\mathcal{B}R(s,a;g) + \gamma \mathcal{T}V(s,a;g) - \mathcal{B}V(s,a;g)}_{R_V(s,a;g)} + 1\right)^2\right] \\ 
   \label{eq:gofar-tabular-last}
   \Rightarrow &\min_{V(s;g)} (1-\gamma) \mu_0^{\top} V + \frac{1}{2} (R_V + I)^{\top} D (R_V+ I)
\end{split}
\end{equation}

Now, we recognize that~\eqref{eq:gofar-tabular-last} is equivalent to Equation 49 in~\cite{ma2022smodice}, as we have reduced goal-conditioned RL to regular RL with an augmented state-space. Now, using the same derivation as in~\cite{ma2022smodice}, we have that
\begin{equation}
    \label{eq:gofar-chi-tabular-optimal-V}
    V^* = \left((\gamma \mc{T} - \mc{B})^\top D (\gamma \mc{T} - \mc{B})\right)^{-1}\left((\gamma-1)\mu_0 + (\mc{B} - \gamma \mc{T})^\top D(I+BR) \right)
\end{equation}
and we can recover $d^*(s,a;g)$:
\begin{equation}
    \label{eq:gofar-chi-tabular-optimal-ratio}
    d^*(s,a;g)= d^O(s,a;g)\left(\mc{B}R(s,a;g) + \gamma \mc{T}V^*(s,a;g) - \mc{B}V^*(s,a;g) + 1\right)
\end{equation}

Given $d^*$, we may extract the optimal policy $\pi^*$ by marginalizing over actions:
\begin{equation}
    \label{eq:pi-star}
    \pi^*(a\mid s, g) = \frac{d^*(s,a;g)}{\sum_a d^*(s,a';g)} = \frac{d^O(s,a;g) \left(R(s;g) + \gamma \mathcal{T}V(s,a;g) - V(s;g)\right)}{\sum_a d^O(s,a;g) \left(R(s;g) + \gamma \mathcal{T}V(s,a;g) - V(s;g)\right)}
\end{equation}

\section{Additional Technical Discussion}
\label{appendix:technical-discussion} 

\subsection{Connecting Goal-Conditioned State-Matching and Probabilistic GCRL}
\label{appendix:probabilistic-gcrl}

Suppose the GCRL problem comes with a reward function $r(s;g)$. We also show that there is an equivalent goal-conditioned state-occupancy matching problem with a target distribution $p(s;g)$ defined based on $r(s;g)$.

\begin{proposition}
(Proposition~\ref{proposition:gcrl-objective} in the paper)
Given any $r(s;g)$, for each $g$ in the support of $p(g)$, define $p(s;g) = \frac{e^{r(s;g)}}{Z(g)}$, where $Z(g) := \int e^{r(s;g)} ds$ is the normalizing constant. Then, the following equality holds: 
\begin{equation}
    -\D_\KL(d^\pi(s;g) \| p(s;g)) + C = (1-\gamma) J(\pi) + \mathcal{H}(d^\pi(s;g)) 
\end{equation}
where $J(\pi)$ is the GCRL objective (Eq.~\eqref{eq:gcrl-objective}) with reward $r(s;g)$ and $C := \mathbb{E}_{g \sim p(g)}[\log Z(g)]$ is a constant. 
\end{proposition}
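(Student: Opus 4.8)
The plan is to expand the KL divergence directly using its definition and recognize the reward and entropy terms. First I would write
\[
-\D_\KL(d^\pi(s;g) \| p(s;g)) = -\mathbb{E}_{g\sim p(g)}\mathbb{E}_{s \sim d^\pi(s;g)}\left[\log \frac{d^\pi(s;g)}{p(s;g)}\right],
\]
where the outer expectation over $g$ comes from interpreting the statement with the desired goal distribution $p(g)$ (consistent with the rest of the paper, where $d^\pi(s,g) = p(g) d^\pi(s;g)$). Then I would substitute the definition $p(s;g) = e^{r(s;g)}/Z(g)$, so that $\log p(s;g) = r(s;g) - \log Z(g)$.

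Next I would split the logarithm into three pieces:
\[
-\mathbb{E}_{g,s}\left[\log d^\pi(s;g)\right] + \mathbb{E}_{g,s}\left[r(s;g)\right] - \mathbb{E}_{g}\left[\log Z(g)\right].
\]
The first term is exactly the state-occupancy entropy $\mathcal{H}(d^\pi(s;g))$ (averaged over $g\sim p(g)$, matching the notation in the statement). The second term is $\mathbb{E}_{(s,g)\sim d^\pi(s,g)}[r(s;g)]$, which by the identity $J(\pi) = \frac{1}{1-\gamma}\mathbb{E}_{(s,g)\sim d^\pi(s,g)}[r(s;g)]$ stated in the Problem Formulation equals $(1-\gamma)J(\pi)$. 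The third term is precisely $C := \mathbb{E}_{g\sim p(g)}[\log Z(g)]$. Rearranging gives $-\D_\KL(d^\pi(s;g)\|p(s;g)) + C = (1-\gamma)J(\pi) + \mathcal{H}(d^\pi(s;g))$, as claimed.

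This proof is essentially a one-line manipulation once the bookkeeping is set up, so there is no real obstacle; the only subtlety worth being careful about is the role of the outer expectation over $g$ — the statement writes things in a slightly compressed ``for each $g$'' form, so I would make explicit that both sides are understood to be averaged against $p(g)$ (or, alternatively, that the identity holds pointwise in $g$ with $C$ replaced by $\log Z(g)$, and then take expectations). I would also note that finiteness of $Z(g)$ and of the entropy term are implicitly assumed, which is why the normalizing constant is well-defined. Beyond that, the argument is just substitution and linearity of expectation.
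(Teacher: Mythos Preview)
Your proposal is correct and follows essentially the same route as the paper's own proof: both expand the KL divergence (with the outer expectation over $g\sim p(g)$), substitute $\log p(s;g)=r(s;g)-\log Z(g)$, and split into the reward term $(1-\gamma)J(\pi)$, the entropy $\mathcal{H}(d^\pi(s;g))$, and the constant $C$. The only cosmetic difference is that the paper starts from $(1-\gamma)J(\pi)$ and works toward the KL expression, whereas you start from the KL and identify the three pieces---the manipulations are identical.
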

\begin{proof}
We have that 
\begin{equation}
\begin{split}
& (1-\gamma)J(\pi) \\ 
    =& \mathbb{E}_{g \sim p(g)} \mathbb{E}_{s \sim d^\pi(s;g)}\left[r(s;g)\right] \\ 
=& \mathbb{E}_{g \sim p(g)} \mathbb{E}_{s \sim d^\pi(s;g)}\left[\log e^{r(s;g)} \right] \\
        =& \mathbb{E}_{g \sim p(g)} \mathbb{E}_{s \sim d^\pi(s;g)}\left[\log \frac{e^{r(s;g)}Z(g)}{Z(g)}\right] \\
=& \mathbb{E}_{g \sim p(g)} \mathbb{E}_{s \sim d^\pi(s;g)}\left[\log \frac{e^{r(s;g)}}{Z(g)}\right] + \mathbb{E}_{g\sim p(g)}[\log Z(g)] \\ 
=&\mathbb{E}_{g \sim p(g)} \mathbb{E}_{s \sim d^\pi(s;g)}\left[\log \frac{e^{r(s;g)}}{Z(g)}\cdot \frac{d^\pi(s;g)}{d^\pi(s;g)} \right] + C \\ 
=& \mathbb{E}_{g \sim p(g)} \mathbb{E}_{s \sim d^\pi(s;g)}\left[\log \frac{p(s;g)}{d^\pi(s;g)} \right] + \mathbb{E}_{g \sim p(g)}\mathbb{E}_{d^\pi(s;g)}[\log d^\pi(s;g)]+ C \\ 
        =& \mathbb{E}_{g\sim p(g)} \left[-\D_\KL(d^\pi(s;g) \| p(s;g)) - \mathcal{H}(d^\pi(s;g)) \right]  + C
\end{split}
\end{equation}
Rearranging the inequality gives the desired result.
\end{proof}

A constant term $C$ appear in the equality to account for the need for normalizing $e^{r(s;g)}$ to make it a proper distribution. This, however, does not change the optimal solution for the goal-conditioned state-occupancy matching objective. Therefore, we have shown that for any choice of reward $r(s;g)$, solving the GCRL problem with a maximum state-entropy regularization is equivalent to optimizing for the goal-conditioned state-occupancy matching objective with target distribution $p(s;g) := \frac{e^{r(s;g)}}{Z(g)}$.

\subsection{Optimality Conditions for Hindsight Relabeling} 
\label{appendix:her-optimality}
In section 4.2, we have stated that HER is not optimal for most choices of reward functions. In this section, we investigate conditions under which hindsight relabeling methods such as HER would be optimal. 

Let the goal-relabeling distribution for HER be $p_{\mathrm{HER}}(g\mid s,a)$; we do not specify the functional form of $p_{\mathrm{HER}}(g\mid s,a)$ for generality (see~\ref{eq:her-definition}). Then, in order for this distribution to be optimal, then it must satisfy
\begin{equation}
    p_{\mathrm{HER}}(g \mid s,a) = p(g\mid s,a) (f'_\star (R(s;g) + \gamma \mathcal{T}V^*(s,a;g) - V^*(s;g))
\end{equation}
Then, the choice of $r(s;g)$ such that this equality holds is the reward function for which HER would be optimal. However, solving for $r(s;g)$ is generally challenging and we leave it to future work for investigating whether doing so is possible for general $f$-divergence coupled with neural networks. 

This optimality condition is related to a prior work~\cite{eysenbach2020rewriting}, which has found that hindsight relabeling is optimal in the sense of maximum-entropy inverse RL~\cite{ziebart2008maximum} for a peculiar choice of reward function (see Equation 9 in~\cite{eysenbach2020rewriting}), which cannot be implemented in practice. Our result is more general as it applies to any choice of $f$-divergence, and is not restricted to the form of maximum-entropy inverse RL.

\subsection{Theoretical Comparison to Prior Regression-based GCRL methods}
\label{appendix:theoretical-comparison} 
In section 4.3, we have stated that GoFAR's theoretical guarantee (Theorem~\ref{theorem:gofar-bound}) is stronger in nature compared to prior regression-based GCRL methods. Here, we provide an in-depth discussion.

Both GCSL~\cite{ghosh2021learning} and WGCSL~\cite{yang2022rethinking} prove that their objectives are lower bounds of the true RL objective (Theorem 3.1 in~\cite{ghosh2021learning} and Theorem 1 in~\cite{yang2022rethinking}, respectively); however, in both works, the lower bounds are loose due to constant terms that do not depend on the policy and hence do not vanish to zero. In contrast, GoFAR's objective~\eqref{eq:f-divergence-bound} is, by construction, a lower bound on the RL objective, as it simply incorporates a $f$-divergence regularization. If the offline data $d^O$ is \textit{on-policy}, then our lower bound is an equality. In contrast, even with on-policy data, the lower bounds in both GCSL and WGCSL are still loose due to the unavoidable constant terms.

GCSL also proves a sub-optimality guarantee (Theorem 3.2 in~\cite{ghosh2021learning}) under the assumption of full state-space coverage. Though full state-space coverage has been considered in some prior offline RL works~\cite{kumar2020conservative, ma2021conservative}, it is much stronger than the concentrability assumption in our Theorem~\ref{theorem:gofar-bound}, which only applies to $d^*$. Furthermore, this guarantee is not statistical in nature, and instead directly makes a strong assumption on the \textit{maximum} total-variance distance between $\pi$ and optimal $\pi^*$ for the GCSL objective, which is difficult to verify in practice. In contrast, our bound suggests asymptotic optimality: given enough offline data, the solution to GoFAR's policy objective will converge to $\pi^*$. Finally, WGCSL proves a policy improvement guarantee (Proposition 1 in~\cite{yang2022rethinking}) under their exponentially weighted advantage; the improvement is not a strict equality, and consequently there is no convergence guarantee to the optimal policy. Furthermore, this result is not directly dependent on their use of an advantage function, so it is not clear the precise role of their advantage function in their algorithm.

\section{Task Descriptions} 
In this section, we describe the tasks in our experiments in Section 5. 

\label{appendix:task-descriptions}
\begin{figure}[H]
\centering
\includegraphics[width=\linewidth]{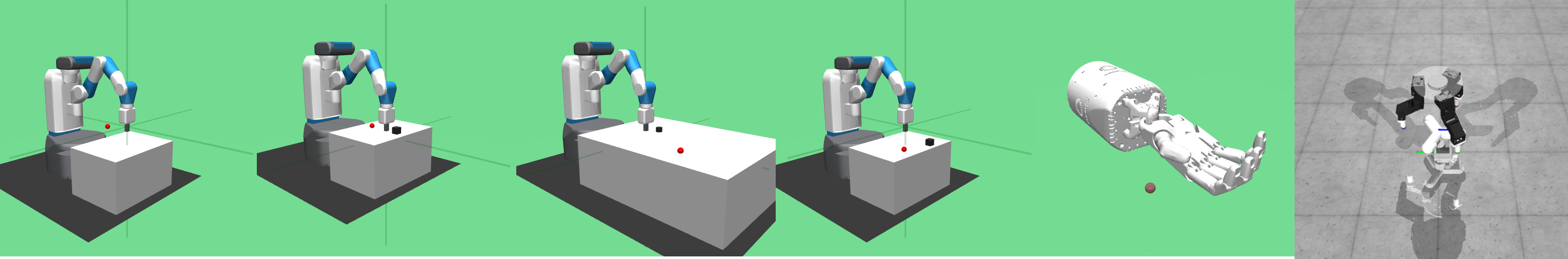}
\caption{Tasks from left to right: FetchReach, FetchPush, FetchSlide, FetchPick, HandReach, D'Claw (Simulation)}
\label{figure:task-descr}
\end{figure}

\subsection{Fetch Tasks}

The Fetch environments involve the Fetch robot with the following specifications, developed by Plappert et al. \cite{plappert2018multi}. 
\begin{itemize}
    \item Seven degrees of freedom
    \item Two pronged parallel gripper
    \item Three dimensional goal representing Cartesian coordinates of target
    \item Sparse, binary reward signal: 0 when at goal with tolerance 5cm and -1 otherwisejk
    \item 25 Hertz simulation frequency
    \item Four dimensional action space
        \begin{itemize}
            \item Three Cartesian dimensions
            \item One dimension to control gripper
        \end{itemize}
\end{itemize}
\subsubsection{Fetch Reach}
The task is to place the end effector at the target goal position. Observations consist of the end effector's positional state, whether the gripper is closed, and the end effector's velocity. The reward is given by:
\[r(s, a, g) = 1 - \mathbbm{1}(\norm{s_{xyz, \text{ee}} - g_{xyz}}_2 \leq 0.05)\]

\subsubsection{Fetch Push}
The task is to push an object to the target goal position. Observations consist of the end effector's position, velocity, and gripper state as well as the object's position, rotational orientation, linear velocity, and angular velocity. The reward is given by:
\[r(s, a, g) = 1 - \mathbbm{1}(\norm{s_{xyz, \text{obj}} - g_{xyz}}_2 \leq 0.05)\]

\subsubsection{Fetch Slide}
In this task, the goal position lies outside of the robot's reach and the robot must slide the puck-like object across the table to the goal.  Observations consist of the end effector's position, velocity, and gripper state as well as the object's position, rotational orientation, linear velocity, and angular velocity. The reward is given by:
\[r(s, a, g) = 1 - \mathbbm{1}(\norm{s_{xyz, \text{obj}} - g_{xyz}}_2 \leq 0.05)\]

\subsubsection{Fetch Pick}
The task is to grasp the object and hold it at the goal, which could be on or above the table. Observations consist of the end effector's position, velocity, and gripper state as well as the object's position, rotational orientation, linear velocity, and angular velocity. The reward is given by:
\[r(s, a, g) = 1 - \mathbbm{1}(\norm{s_{xyz, \text{obj}} - g_{xyz}}_2 \leq 0.05)\]

\subsection{Hand Reach}
Uses a 24 DoF robot hand with a 20 dimensional action space. Observations consist of each of the 24 joints' positions and velocities. The goal space is 15 dimensional corresponding to the positions of each of its five fingers. The goal is achieved when the mean distance of the fingers to their goals is less than 1cm. The reward is binary and sparse: 0 if the goal is reached and -1 otherwise, i.e.
\[r(s, a, g) = 1 - \mathbbm{1}\left(\frac{1}{5}\sum_{i=1}^5 \norm{s_{i} - g_i}_2 \leq 0.01\right)\]

\subsection{D'ClawTurn (Simulation)}
First introduced by Ahn et al. \cite{ahn2020robel}, the D'Claw environment has a 9 DoF three-fingered robotic hand. The turn task consists of turning the valve to a desired angle. The initial angle is randomly chosen from $[-\frac{\pi}{3}, \frac{\pi}{3}]$; the target angle is randomly chosen from $[-\frac{2*\pi}{3}, \frac{2*\pi}{3}]$. The observation space is 21D, consisting of the current joint angles $\theta_t$, their velocities $\dot{\boldsymbol{\theta}}$, angle between current and goal angle, and the previous action. The environment terminates after 80 steps. The reward function is defined as:
\[r = \mathbbm{1}\left(\left|\operatorname{arctan2}\left(\frac{s_{y,obj}}{s_{x,obj}}\right) - \operatorname{arctan2}\left(\frac{g_{y,obj}}{g_{x,obj}}\right)\right| \leq 0.1\right)\]

\subsection{D'ClawTurn (Real)}
To make real-world data collection easier, we slightly modify the initial and target angle distributions. The initial angle is randomly chosen from $[-\frac{\pi}{3}, \frac{\pi}{3}]$; the target angle is randomly chosen from $[-\frac{\pi}{2}, \frac{\pi}{2}]$. Using this task distribution, collecting 400K transitions with random actions takes about 15 hours. In Figure~\ref{figure:dclaw-large}, we also include a larger picture of the robot platform.
\vspace{0.2cm}
\begin{figure}[H]
    \centering
    \includegraphics[width=0.4\linewidth]{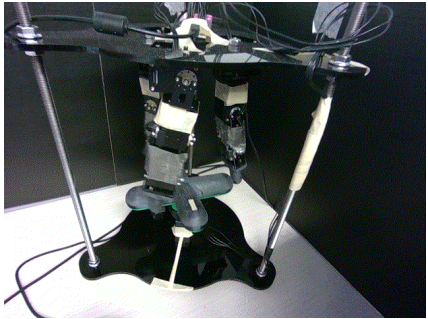}
    \caption{The D'Claw tri-finger platform.}
    \label{figure:dclaw-large}
\end{figure}

\section{Experimental Details}
\label{appendix:experimental-details}
In this section, we provide experimental details omitted in Section 5 of the main text. These include (1) technical details of the baseline methods, (2) hyperparameter and architecture details for all methods, (3) offline GCRL dataset details, and finally, (4) experimental details of the zero-shot transfer experiment. 

\subsection{Baseline Implementation Details} 
\para{DDPG.} We use an open-source implementation of DDPG, which has already tuned DDPG on the set of Fetch tasks. We implement all other methods on top of this implementation, keeping identical architectures and hyperparameters when appropriate. The critic objective is
\begin{equation}
    \label{eq:ddpg-critic-loss}
    \min_Q \mathbb{E}_{(s_t,a_t,s_{t+1},g) \sim d^{\tilde{O}}}[(r(s_t,g) + \gamma \bar{Q}(s_{t+1}, \pi(s_{t+1},g), g) - Q(s_t,a_t,g))^2]
\end{equation}
where $\bar{Q}$ denotes the stop-gradient operation. The policy objective is
\begin{equation}
    \label{eq:ddpg-actor-loss}
    \min_\pi - \mathbb{E}_{(s_t,a_t,s_{t+1},g) \sim d^{\tilde{O}}}[Q(s_t, \pi(s_t, g), g)]
\end{equation}
DDPG updates the critic and the policy in an alternating fashion. 

\para{ActionableModel.} We implement AM on top of DDPG. Specifically, we add a CQL loss in the critic update:
\begin{equation}
    \label{eq:cql-loss}
    \mathbb{E}_{(s,g)\sim d^{\tilde{O}}, a\sim \mathrm{exp}(Q)}[Q(s,a,g)]
\end{equation}
where $d^{\tilde{O}}$ is the distribution of the relabelled dataset. In practice, we sample $10$ random actions from the action-space to approximate this expectation. Furthermore, we implement goal-chaining, where for half of the relabeled transitions in each minibatch update, the relabelled goals are randomly sampled from the offline dataset. We found goal-chaining to not be stable in some environments, in particular, FetchPush, FetchPickAndPlace, and FetchSlide. Therefore, to obtain better results, we remove goal-chaining for these environments in our experiments. 

\para{GCSL.} We implement GCSL by removing the DDPG critic component and changing the policy loss to maximum likelihood:
\begin{equation}
    \label{eq:gcsl-loss}
    \min_\pi -\mathbb{E}_{(s,a, g)\sim d^{\tilde{O}}}\left[ \log \pi(a \mid s, g)\right]
\end{equation}

\para{WGCSL.} We implement WGCSL on top of GCSL by including a Q-function. The Q-function is trained using TD error as in DDPG and provided an advantage weighting in the regression loss. The advantage term we compute is $A(s_t,a_t,g) = r(s_t;g) + \gamma Q(s_{t+1}, \pi(s_{t+1},g);g)- Q(s_t,a_t;g)$.
Using this, the WGCSL policy objective is
\begin{equation}
\label{eq:wgcsl-loss}
\min_\pi -\mathbb{E}_{(s_t,a_t,\phi(s_i))\sim d^{\tilde{O}}}\left[\gamma^{i-t}\mathrm{exp}_{clip}(A(s_t,a_t,\phi(s_i)))\log \pi(a_t \mid s_t, \phi(s_i))\right]
\end{equation}
where we clip $\mathrm{exp}(\cdot)$ for numerical stability. The original WGCSL uses different HER rates for the critic and the actor training. To make the implementation simple and consistent with all other approaches, we use the same HER rate for both components. We note that the original WGCSL computes the advantage term slightly differently as $A(s_t,a_t,g) = r(s_t;g) + \gamma Q(s_{t+1}, \pi(s_{t+1},g);g) - Q(s_t, \pi(s_{t},g);g)$; this version of WGCSL\footnote{We thank Joey Hejna for pointing out this difference in an email correspondence.} is incorporated in our open-sourced code.

With the exception of AM, all baselines set the goal-relabeling distribution $d^{\tilde{O}}$ to be the uniform distribution over future states in the same trajectory (See Equation~\eqref{eq:her-definition}).

\subsection{Architectures and Hyperparameters}
Each algorithm uses their own set of fixed hyperparameters for all tasks. WGCSL, GCSL, and DDPG are already tuned on our set of tasks~\cite{plappert2018multi, yang2022rethinking}, so we use the reported values from prior works; AM, in our implementation, shares same networks as DDPG, so we use DDPG's values. For GoFAR, we use identical hyperparameters as WGCSL because they share similar network components; GoFAR additionally trains a discriminator, for which we use the same architecture and learning rate as the value network. We impose a small discriminator gradient penalty~\cite{gulrajani2017improved} to prevent overfitting. For all experiments,
We train each method for 3 seeds, and each training run uses 400k minibatch updates of size 512.
The architectures and hyperparameters for all methods are reported in Table~\ref{table:gofar-hyperparameters}.

\begin{table}[ht]
\centering
\caption{Offline GCRL Hyperparameters.}\label{table:gofar-hyperparameters}
\begin{tabular}{cll}
\toprule
& Hyperparameter & Value \\
\midrule
Hyperparameters & Optimizer & Adam~\cite{kingma2014adam} \\
                                      & Critic learning rate & 5e-4 (1e-3 for AM/DDPG) \\
                                      & Actor learning rate  & 5e-4 (1e-3 for AM/DDPG) \\
                                      & Discriminator learning rate & 5e-4 \\
                                      & Discriminator gradient penalty & 0.01 \\
                                      & Mini-batch size      & 256 \\
                                      & Discount factor      & 0.98 \\
                                      
\midrule
Architecture    
                                      & Discriminator hidden dim     & 256        \\
                                      & Discriminator hidden layers  & 2          \\
                                      & Discriminator activation function & ReLU \\
                                      & Critic (resp. Value) hidden dim    & 256        \\
                                      & Critic (resp. Value) hidden layers & 2          \\
                                      & Critic (resp.Value) activation function & ReLU \\
                                      & Actor hidden dim     & 256        \\
                                      & Actor hidden layers  & 2          \\
                                      & Actor activation function & ReLU \\
\bottomrule
\end{tabular}
\end{table} %

\subsection{Offline GCRL Experiments}
\label{appendix:offline-gcrl} 

\para{Datasets.} For each environment, the offline dataset composition is determined by whether data collected by random actions provides sufficient coverage of the desired goal distribution. For FetchReach and D'ClawTurn, we find this to be the case and choose the offline dataset to be 1 million random transitions. For the other four tasks, random data does not capture meaningful goals, so we create a mixture dataset with 100K transitions from a trained DDPG-HER agent and 900K random transitions; the transitions are not labeled with their sources. This mixture setup has been considered in prior works~\cite{kim2022demodice, ma2022smodice} and is reminiscent of real-world datasets, where only a small portion of the dataset is task-relevant but all transitions provide useful information about the environment. 

\subsection{Zero-Shot Transfer Experiments}
\label{appendix:zero-shot-transfer} 
We use GoFAR (Binary) variant for trainning the GoFAR planner. The low-level controller is trained using an online DDPG algorithm on a narrow goal distribution, set to be closed to the object's initial positions. 

GoFAR Hierarchical Controller operates by first generating a sequence of subgoals $(g_1,...,g_T)$ using $\pi_{\mathrm{high}}$ by recursively feeding the newest generated goal and conditioning on the final goal $g$. Then, at each time step $t$, the low-level controller executes action $\pi_{\mathrm{low}}(a_t \mid s_t, g_t)$. The high-level subgoals are not re-planned during low-level controller execution. We note that this is a simple planning algorithm, and improvement in performance can be expected by considering more sophisticated planning approaches.
\section{Additional Results}
\label{appendix:additional-results} 

\subsection{Offline GCRL Full Results}
In this section, we provide the full results table for discounted return, final distance, and success rate metrics, including error bars over 10 random seeds. The number inside the parenthesis indicates the best HER rate for the baseline methods on the task. Star ($\star$) indicate statistically significant improvement over the second best-performing method under a 2-sample $t$-test.

\begin{table}[H]
\caption{Discounted Return on offline GCRL tasks, averaged over $10$ random seeds.}
\label{table:offline-gcrl-full-discounted-return-appendix}
\centering
\resizebox{\textwidth}{!}{
\begin{tabular}{l|rrr|rr}
\toprule
\multicolumn{1}{c|}{\textbf{Task}}
& \multicolumn{3}{c|}{\textbf{Supervised Learning}}
& \multicolumn{2}{c}{\textbf{Actor-Critic}} \\ 
 &  \textbf{GoFAR} (Ours) & \textbf{WGCSL} & \textbf{GCSL} & \textbf{AM} &\textbf{DDPG} \\ 
\midrule 
FetchReach  & 28.2 $\pm$ {\scriptsize0.61} & 21.9$\pm$ {\scriptsize2.13} (1.0) & 20.91 $\pm$ {\scriptsize2.78} (1.0)  & \textbf{30.1} $\pm$ {\scriptsize0.32} (0.5) & 29.8 $\pm$ {\scriptsize0.59} (0.2) \\ 
FetchPick & \textbf{19.7} $\pm$ {\scriptsize2.57} & 9.84 $\pm$ {\scriptsize2.58} (1.0) & 8.94 $\pm$ {\scriptsize3.09} (1.0) & 18.4 $\pm$ {\scriptsize3.51} (0.5) & 16.8 $\pm$ {\scriptsize3.10} (0.5) \\
FetchPush ($\star$) & \textbf{18.2} $\pm$ {\scriptsize3.00} &14.7 $\pm$ {\scriptsize2.65} (1.0) & 13.4 $\pm$ {\scriptsize3.02} (1.0)  & 14.0 $\pm$ {\scriptsize2.81} (0.5) & 12.5 $\pm$ {\scriptsize4.93} (0.5) \\
FetchSlide & 2.47 $\pm$ {\scriptsize1.44} & \textbf{2.73} $\pm$ {\scriptsize1.64} (1.0) & 1.75 $\pm$ {\scriptsize1.3}(1.0) & 1.46 $\pm$ {\scriptsize1.38} (0.5) & 1.08 $\pm$ {\scriptsize1.35} (0.5)  \\ 
\midrule 
HandReach ($\star$) & \textbf{11.5} $\pm$ {\scriptsize5.26} & 5.97 $\pm$ {\scriptsize4.81} (1.0) & 1.37 $\pm$ {\scriptsize2.21} (1.0)& 0. $\pm$ {\scriptsize0.0} (0.5) & 0.81 $\pm$ {\scriptsize1.73} (0.5)\\ 
D'ClawTurn ($\star$) & \textbf{9.34} $\pm$ {\scriptsize 3.15} & 0.0 $\pm$ {\scriptsize0.0} (1.0) & 0.0 $\pm$ {\scriptsize0.0} (1.0) & 2.82$\pm$ {\scriptsize 1.71} (1.0) & 0.0$\pm$ {\scriptsize 0.0} (0.2) \\ 
\midrule
Average Rank &  \textbf{1.5} & 3 & 4.17 & 2.83 & 4 \\ 
\bottomrule
\end{tabular}}
\end{table}

\begin{table}[H]
\caption{Final Distance on offline GCRL tasks, averaged over $10$ random seeds.}
\label{table:offline-gcrl-full-final-distance-appendix}
\centering
\resizebox{\textwidth}{!}{
\begin{tabular}{l|rrr|rr}
\toprule
\multicolumn{1}{c|}{\textbf{Task}}
& \multicolumn{3}{c|}{\textbf{Supervised Learning}}
& \multicolumn{2}{c}{\textbf{Actor-Critic}} \\ 
 &  \textbf{GoFAR} (Ours) & \textbf{WGCSL} & \textbf{GCSL} & \textbf{AM} &  \textbf{DDPG} \\ 
\midrule 
FetchReach  & 0.018 $\pm$ {\scriptsize0.003} & 0.007 $\pm$ {\scriptsize0.0043}(1.0) & 0.008 $\pm$ {\scriptsize0.008}(1.0) & \textbf{0.007}$\pm$ \scriptsize{0.001} (0.5) & 0.041 $\pm$ {\scriptsize0.005} (0.2) \\ 
FetchPickAndPlace & \textbf{0.036} $\pm$ {\scriptsize0.013} & 0.094 $\pm$ {\scriptsize0.043}(1.0) & 0.108 $\pm$ {\scriptsize0.060}(1.0) & 0.040 $\pm$ {\scriptsize0.020}(0.5) & 0.043 $\pm$ {\scriptsize0.021}(0.5) \\
FetchPush & \textbf{0.033} $\pm$ {\scriptsize0.008} & 0.041 $\pm$ {\scriptsize0.020}(1.0) & 0.042 $\pm$ {\scriptsize0.018} (1.0) & 0.070 $\pm$ {\scriptsize0.039}(0.5) & 0.060 $\pm$ {\scriptsize0.026} (0.5) \\
FetchSlide ($\star$) & \textbf{0.120} $\pm$ {\scriptsize0.02}& 0.173 $\pm$ {\scriptsize0.04}(1.0) & 0.204 $\pm$ {\scriptsize0.051} (1.0) & 0.198 $\pm$ {\scriptsize0.059} (0.5) & 0.353 $\pm$ {\scriptsize0.248} (0.5) \\ 
\midrule 
HandReach ($\star$) & \textbf{0.024} $\pm$ {\scriptsize0.009} & 0.035 $\pm$ {\scriptsize0.012} (1.0) & 0.038 $\pm$ {\scriptsize0.013}(1.0) & 0.037 $\pm$ {\scriptsize0.004}(0.5) & 0.038 $\pm$ {\scriptsize0.013} (0.5) \\ 
D'ClawTurn ($\star$) & \textbf{0.92} $\pm$ {\scriptsize0.28} & 1.49 $\pm$ {\scriptsize0.26} (1.0) & 1.54 $\pm$ {\scriptsize0.15} (1.0) & 1.28 $\pm$ {\scriptsize0.26} (1.0) & 1.54 $\pm$ {\scriptsize0.13} (0.2) \\ 
\midrule
Average Rank & \textbf{1.5} & 2.33 & 4.25 & 2.67 & 4.5\\ 
\bottomrule
\end{tabular}}
\end{table}

\begin{table}[H]
\caption{Success Rate on offline GCRL tasks, averaged over $10$ random seeds.}
\label{table:offline-gcrl-full-final-distance-appendix}
\centering
\resizebox{\textwidth}{!}{
\begin{tabular}{l|rrr|rr}
\toprule
\multicolumn{1}{c|}{\textbf{Task}}
& \multicolumn{3}{c|}{\textbf{Supervised Learning}}
& \multicolumn{2}{c}{\textbf{Actor-Critic}} \\ 
 &  \textbf{GoFAR} (Ours) & \textbf{WGCSL} & \textbf{GCSL} & \textbf{AM} &  \textbf{DDPG} \\ 
\midrule 
FetchReach  & \textbf{1.0} $\pm$ {\scriptsize0.0} & 0.99 $\pm$ {\scriptsize 0.01}  (1.0) & 0.98 $\pm$ {\scriptsize0.05} (1.0) & \textbf{1.0} $\pm$ {\scriptsize0.0} (0.5) & 0.99 $\pm$ {\scriptsize0.02} (0.2) \\ 
FetchPickAndPlace & \textbf{0.84} $\pm$ {\scriptsize0.12} & 0.54 $\pm$ {\scriptsize0.16} (1.0) & 0.54 $\pm$ {\scriptsize0.20}(1.0) & 0.78 $\pm$ {\scriptsize0.15}(0.5) & 0.81 $\pm$ {\scriptsize0.13}(0.5) \\
FetchPush ($\star$) & \textbf{0.88} $\pm$ {\scriptsize0.09} & 0.76 $\pm$ {\scriptsize0.12}(1.0) & 0.72 $\pm$ {\scriptsize0.15}(1.0) & 0.67 $\pm$ {\scriptsize0.14} (0.5) & 0.65 $\pm$ {\scriptsize0.18} (0.5) \\
FetchSlide & \textbf{0.18} $\pm$ {\scriptsize0.12}& \textbf{0.18} $\pm$ {\scriptsize0.14}(1.0) & 0.17 $\pm$ {\scriptsize0.13} (1.0) & 0.11 $\pm$ {\scriptsize0.09} (0.5) & 0.08 $\pm$ {\scriptsize0.11} (0.5) \\ 
\midrule 
HandReach & \textbf{0.40} $\pm$ {\scriptsize0.20} & 0.25 $\pm$ {\scriptsize0.23} (1.0) & 0.047 $\pm$ {\scriptsize0.10} (1.0) & 0.0 $\pm$ {\scriptsize0.0} (0.5) &  0.023 $\pm$ {\scriptsize0.054} (0.5) \\ 
D'ClawTurn ($\star$) & \textbf{0.26} $\pm$ {\scriptsize0.13} & 0.0 $\pm$ {\scriptsize0.0} (1.0) & 0.0 $\pm$ {\scriptsize0.0} (1.0) & 0.13 $\pm$ {\scriptsize0.14} (1.0) & 0.01 $\pm$ {\scriptsize0.02} (0.2) \\ 
\midrule
Average Rank & \textbf{1} & 3 & 4 & 3.33 & 3.67 \\ 
\bottomrule
\end{tabular}}
\end{table}

\subsection{Ablations}
We also include the full task-breakdown table of GoFAR ablations presented in Figure~\ref{figure:ablation-studies} for completeness. 
As shown in~\ref{table:ablation-studies-discounted-return}, GoFAR and GoFAR (HER) perform comparatively on all tasks. GoFAR (binary) is slightly worse across tasks, and GoFAR (KL) collapses due to the use of an unstable $f$-divergence. 
\begin{table}[H]
\caption{GoFAR Ablation Studies}
\label{table:ablation-studies-discounted-return}
\centering
\resizebox{\textwidth}{!}{
\begin{tabular}{l|rrrrrr}
\toprule
\textbf{Variants} & FetchReach & FetchPickAndPlace & FetchPush & FetchSlide & HandReach & DClawTurn\\ 
\midrule 
GoFAR & 27.8 $\pm$ 0.55& 19.5 $\pm$ 4.13& 18.9 $\pm$ 3.87 & 3.67 $\pm$ 0.78& 11.9 $\pm$ 3.00 & 9.34 $\pm$ 3.15\\
GoFAR (HER) & 28.3$\pm$ 0.65 & 19.8$\pm$ 2.82 & 20.5$\pm$ 2.29 & 3.85$\pm$ 0.80 & 8.02$\pm$ 5.70 & 10.51 $\pm$ 3.51\\ 
GoFAR (Binary) & 26.1$\pm$ 1.14 & 17.4$\pm$1.78 & 17.4 $\pm$ 2.67 & 3.69$\pm$ 1.75 & 6.01$\pm$ 1.62 & 5.13 $\pm$ 4.05\\ 
GoFAR (KL) & 0$\pm$0.0 & 0$\pm$0.0 & 0$\pm$0.0 & 0$\pm$0.0 & 0$\pm$ 0.0 & 0$\pm$ 0.0\\ 
\bottomrule
\end{tabular}}
\end{table}

\subsection{Real-World Dexterous Manipulations} 
\label{appendix:additional-results-dclaw}
In our qualitative analysis, we visualize all methods on a specific task instance of turning the valve prong (marked by the \textcolor{red}{red} strip) clockwise for $90$ degree; the goal location is marked by the \textcolor{green}{green} strip. The robot initial pose is randomized. As shown in Figure~\ref{figure:dclaw-real-vis}, GoFAR reaches the goal with three random initial poses, whereas all baselines fail. See the figure captions for detail. Policy videos are included in the supplementary material. 

\begin{figure}[H]
\begin{subfigure}[b]{\linewidth}
\includegraphics[width=\linewidth]{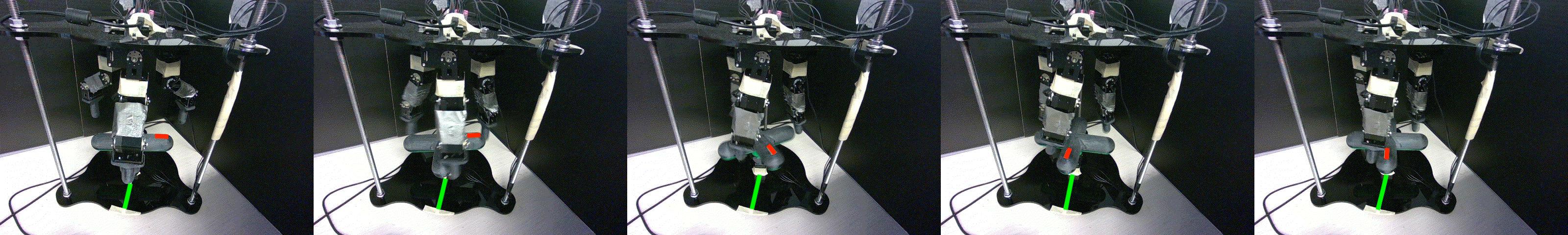} 
\includegraphics[width=\linewidth]{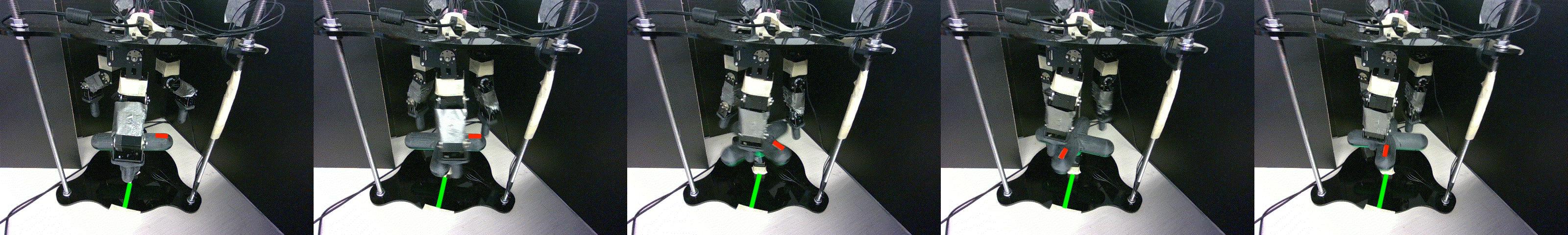}
\includegraphics[width=\linewidth]{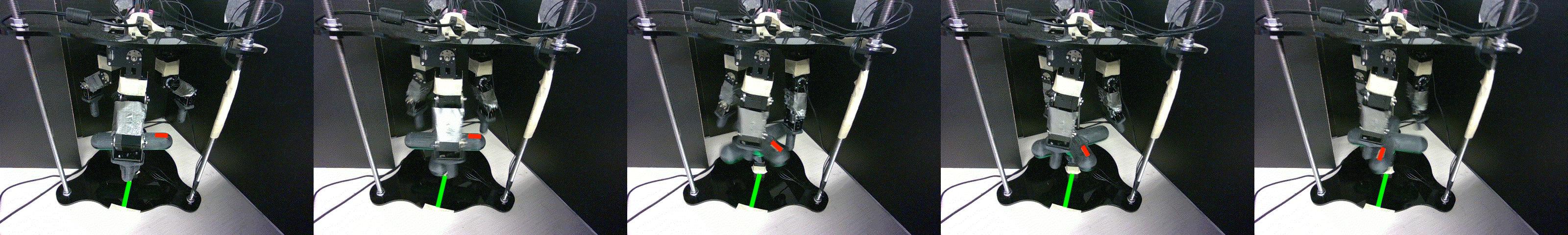}
\caption{GoFAR robustly achieved the goal with three random initial poses; in the first two runs, it demonstrates ``recovery'' behavior, as the robot would initially overshoot and then turn the valve counterclockwise. In the last run, the robot initially undershoots and then turns again to reach the goal.}
\end{subfigure}

\begin{subfigure}[b]{\linewidth}
\includegraphics[width=\linewidth]{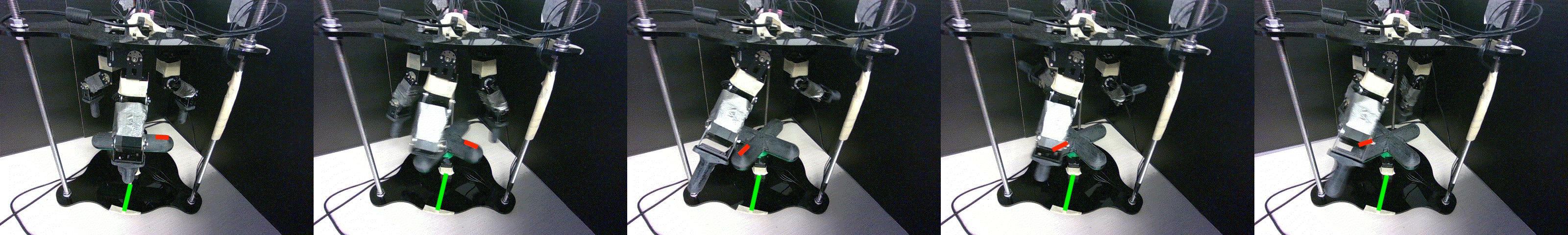} 
\includegraphics[width=\linewidth]{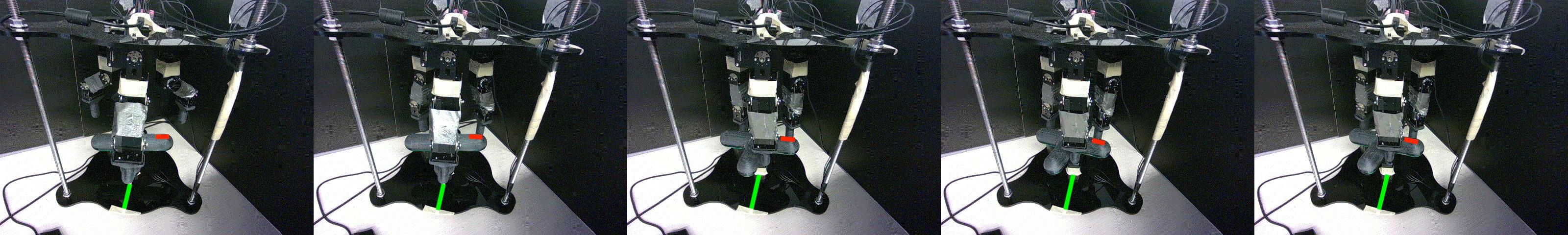}
\includegraphics[width=\linewidth]{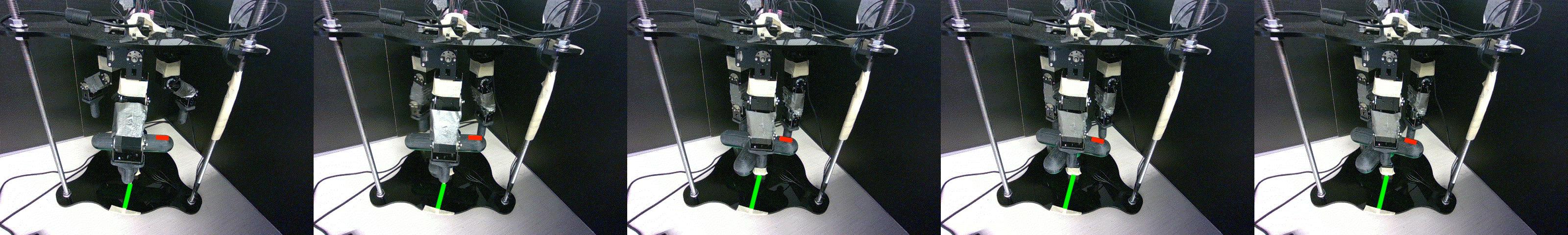}
\includegraphics[width=\linewidth]{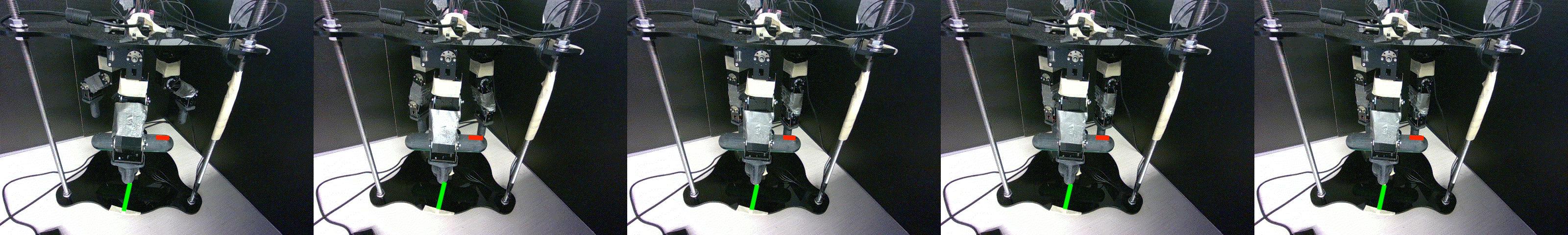}
\caption{Baselines fail to turn the volve prong (marked by the \textcolor{red}{red} strip) to the goal angle (marked by the \textcolor{green}{green} strip). AM is the only method that is able to rotate the prong to some degree, though it overshoots in this case and exhibits unnatural behavior.}
\end{subfigure}
\caption{D'ClawTurn policy visualization.}
\label{figure:dclaw-real-vis}
\end{figure} 

\subsection{Zero-Shot Plan Transfer}
\label{appendix:additional-results-zero-shot}
We visualize GoFAR hierarchical controller and the plain low-level controller on three distinct goals in Figure~\ref{figure:zero-shot-transfer-vis}. See the figure caption for detail. Policy videos are included in the supplementary material. 

\begin{figure}[H]
\begin{subfigure}[b]{\linewidth}
\includegraphics[width=\linewidth]{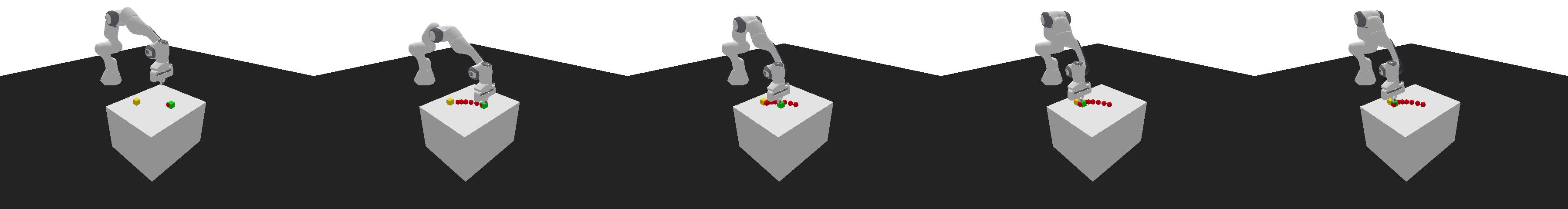} 
\includegraphics[width=\linewidth]{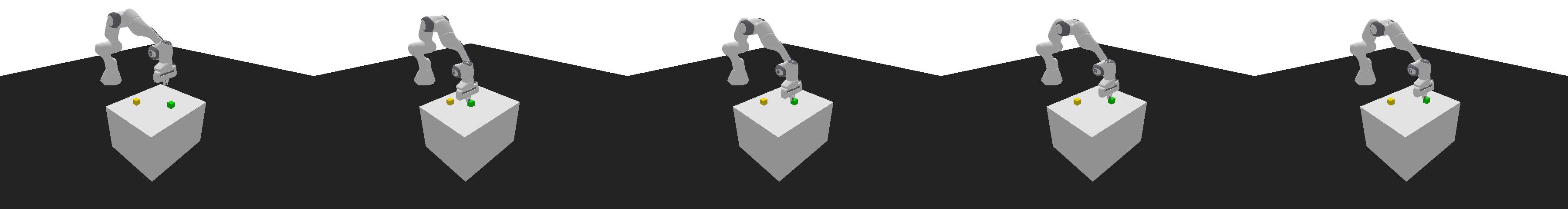}
\caption{Goal 1}

\end{subfigure}

\begin{subfigure}[b]{\linewidth}
\includegraphics[width=\linewidth]{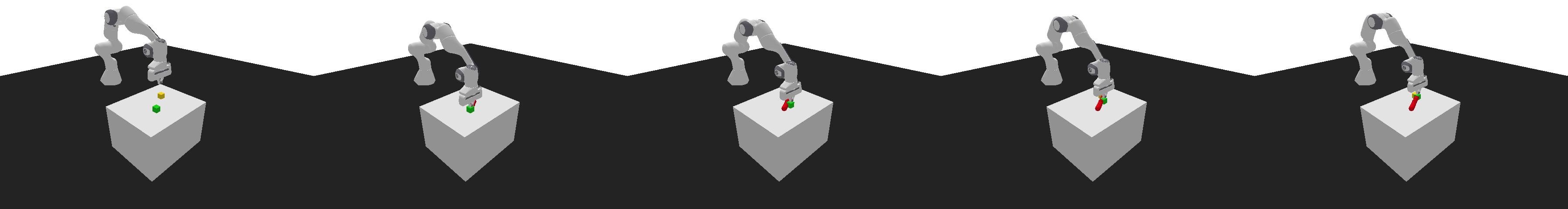} 
\includegraphics[width=\linewidth]{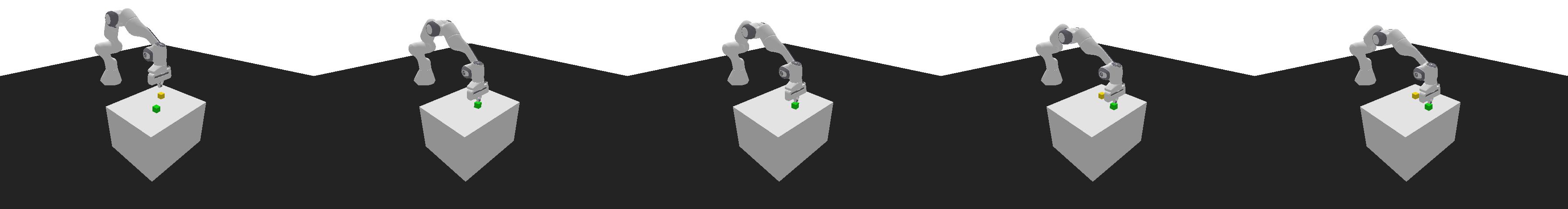}
\caption{Goal 2}
\end{subfigure}

\begin{subfigure}[b]{\linewidth}
\includegraphics[width=\linewidth]{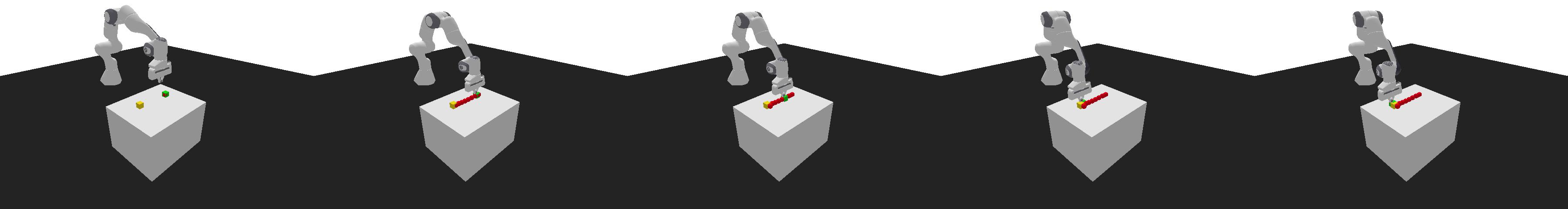} 
\includegraphics[width=\linewidth]{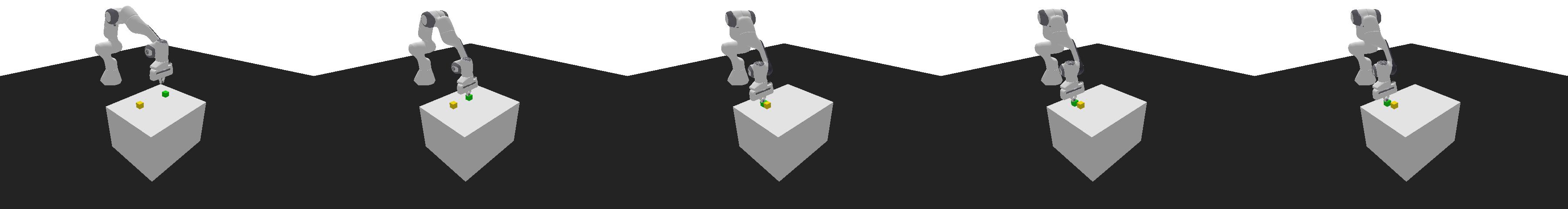}
\caption{Goal 3}
\end{subfigure}
\caption{Qualitative comparison of GoFAR hierarchical controller (top) vs. plain low-level controller (bottom) on representative goals in the Franka pushing task. \textcolor{red}{Red} circles represent intermediate subgoals generated by the GoFAR planner. As shown, the low-level controller only succeeds in Goal 3, whereas the hierarchical controller achieves the distant goals in all three cases.}
\label{figure:zero-shot-transfer-vis}
\end{figure}
\end{document}